\newtheorem{lemma}{Lemma}
\newtheorem{proposition}{Proposition}
\newtheorem{theorem}{Theorem}
\newtheorem{assumption}{Assumption}
\newenvironment{manualtheorem}[1]{%
  \manualtheoreminner
}{\endmanualtheoreminner}
\newenvironment{manualproposition}[1]{%
  \manualpropositioninner
}{\endmanualpropositioninner}
\definecolor{highlight}{gray}{0.9}
\def\UrlSpecials{\do\~{\kern -.15em\lower .7ex\hbox{~}\kern .04em}} \catcode`~=13 
\newcommand{\be}{\mathbf{e}}
\newcommand{\bu}{\mathbf{u}}
\newcommand{\bv}{\mathbf{v}}
\newcommand{\by}{\mathbf{y}}
\newcommand{\bz}{\mathbf{z}}
\newcommand{\bbR}{\mathbb{R}}
\DeclareMathAlphabet{\mathbsf}{OT1}{cmss}{bx}{n}
\DeclareMathAlphabet{\mathssf}{OT1}{cmss}{m}{sl}% slanted sans serif
\DeclareSymbolFont{bsfletters}{OT1}{cmss}{bx}{n}  
\DeclareSymbolFont{ssfletters}{OT1}{cmss}{m}{n}
\DeclareMathSymbol{\bsfGamma}{0}{bsfletters}{'000}
\DeclareMathSymbol{\ssfGamma}{0}{ssfletters}{'000}
\DeclareMathSymbol{\bsfDelta}{0}{bsfletters}{'001}
\DeclareMathSymbol{\ssfDelta}{0}{ssfletters}{'001}
\DeclareMathSymbol{\bsfTheta}{0}{bsfletters}{'002}
\DeclareMathSymbol{\ssfTheta}{0}{ssfletters}{'002}
\DeclareMathSymbol{\bsfLambda}{0}{bsfletters}{'003}
\DeclareMathSymbol{\ssfLambda}{0}{ssfletters}{'003}
\DeclareMathSymbol{\bsfXi}{0}{bsfletters}{'004}
\DeclareMathSymbol{\ssfXi}{0}{ssfletters}{'004}
\DeclareMathSymbol{\bsfPi}{0}{bsfletters}{'005}
\DeclareMathSymbol{\ssfPi}{0}{ssfletters}{'005}
\DeclareMathSymbol{\bsfSigma}{0}{bsfletters}{'006}
\DeclareMathSymbol{\ssfSigma}{0}{ssfletters}{'006}
\DeclareMathSymbol{\bsfUpsilon}{0}{bsfletters}{'007}
\DeclareMathSymbol{\ssfUpsilon}{0}{ssfletters}{'007}
\DeclareMathSymbol{\bsfPhi}{0}{bsfletters}{'010}
\DeclareMathSymbol{\ssfPhi}{0}{ssfletters}{'010}
\DeclareMathSymbol{\bsfPsi}{0}{bsfletters}{'011}
\DeclareMathSymbol{\ssfPsi}{0}{ssfletters}{'011}
\DeclareMathSymbol{\bsfOmega}{0}{bsfletters}{'012}
\DeclareMathSymbol{\ssfOmega}{0}{ssfletters}{'012}
\DeclareMathOperator{\tr}{tr}
\newcommand{\qednew}{\nobreak \ifvmode \relax \else
      \ifdim\lastskip<1.5em \hskip-\lastskip
      \hskip1.5em plus0em minus0.5em \fi \nobreak
      \vrule height0.75em width0.5em depth0.25em\fi}
\title{Towards Understanding and Mitigating \\ Dimensional Collapse in Heterogeneous \\ Federated Learning}
\author{Yujun Shi\textsuperscript{\rm 1}
%\thanks{Work done when interning with Song Bai.}
\quad
Jian Liang\textsuperscript{\rm 3}
\quad
Wenqing Zhang\textsuperscript{\rm 2}
\quad
Vincent Y.~F.~Tan\textsuperscript{\rm 1}
\quad
Song Bai\textsuperscript{\rm 2}
\\
\textsuperscript{\rm 1}National University of Singapore
\quad \textsuperscript{\rm 2}ByteDance Inc. \quad
\textsuperscript{\rm 3}Institute of Automation, CAS \\
{\tt\small shi.yujun@u.nus.edu}
\quad
{\tt\small vtan@nus.edu.sg}
\quad
{\tt\small songbai.site@gmail.com}
}
\begin{document}

\maketitle

\begin{abstract}
Federated learning aims to train models collaboratively across different clients without sharing data for privacy considerations. However, one major challenge for this learning paradigm is the {\em data heterogeneity} problem, which refers to the discrepancies between the local data distributions among various clients. To tackle this problem, we first study how data heterogeneity affects the representations of the globally aggregated models. Interestingly, we find that heterogeneous data results in the global model suffering from severe {\em dimensional collapse}, in which representations tend to reside in a lower-dimensional space instead of the ambient space. Moreover, we observe a similar phenomenon on models locally trained on each client and deduce that the dimensional collapse on the global model is inherited from local models. In addition, we theoretically analyze the gradient flow dynamics to shed light on how data heterogeneity result in dimensional collapse for local models. To remedy this problem caused by the data heterogeneity, we propose {\sc FedDecorr}, a novel method that can effectively mitigate dimensional collapse in federated learning. Specifically, {\sc FedDecorr} applies a regularization term during local training that encourages different dimensions of representations to be uncorrelated. {\sc FedDecorr}, which is implementation-friendly and computationally-efficient, yields consistent improvements over baselines on standard benchmark datasets.
%Code: \href{https://github.com/bytedance/FedDecorr}{https://github.com/bytedance/FedDecorr}.
\end{abstract}

\vspace{-0.1cm}
\section{Introduction}
\vspace{-0.1cm}
% briefly introduce fedlearn paradigm
With the rapid development deep learning and the availability of large amounts of data, concerns regarding data privacy have been attracting increasingly more attention from industry and academia. To address this concern, \cite{mcmahan2017communication} propose {\em  Federated Learning}---a decentralized training paradigm enabling collaborative training across different clients without sharing data.

% why data heterogeneity can be harmful.
One major challenge in federated learning is the potential discrepancies in the distributions of local training data among clients, which is known as the {\em data heterogeneity} problem. In particular, this paper focuses on the heterogeneity of {\em label distributions} (see Fig.~\ref{figure1}(a) for an example).
Such discrepancies can result in drastic disagreements between the local optima of the clients and the desired global optimum, which may lead to severe performance degradation of the global model.
Previous works attempting to tackle this challenge mainly focus on the model parameters, either during local training \citep{li2020federated,karimireddy2020scaffold} or global aggregation \citep{wang2020tackling}.
However, these methods usually result in an excessive computation burden or high communication costs \citep{li2021federatedempirical} because deep neural networks are typically heavily over-parameterized.
In contrast, in this work, we focus on the representation space of the model and study the impact of data heterogeneity.
%\footnote{In this work, we refer to the output of the penultimate layer of a neural network as the ``representation''.}

% observations on global model representations
To commence, we study how heterogeneous data affects the global model in federated learning in Sec.~\ref{emp_obs_global}.
Specifically, we compare representations produced by global models trained under different degrees of data heterogeneity.
Since the singular values of the covariance matrix provide a comprehensive characterization of the distribution of high-dimensional embeddings, we use it to study the representations output by each global model.
Interestingly, we find that as the degree of data heterogeneity increases, more singular values tend to evolve towards zero. This observation suggests that stronger data heterogeneity causes the trained global model to suffer from more severe {\em dimensional collapse}, whereby representations are biased towards residing in a lower-dimensional space (or manifold).
A graphical illustration of how heterogeneous training data affect output representations is shown in Fig.~\ref{figure1}(b-c).
Our observations suggest that dimensional collapse might be one of the key reasons why federated learning methods struggle under data heterogeneity. Essentially, dimensional collapse is a form of {\em oversimplification} in terms of the model, where the representation space is not being fully utilized to discriminate diverse data of different classes.

% observations on local model representations
Given the observations made on the global model, we conjecture that the dimensional collapse of the global model is inherited from models locally trained on various clients. This is because the global model is a result of the aggregation of local models.
To validate our conjecture, we further visualize the local models in terms of the singular values of representation covariance matrices in Sec.~\ref{emp_obs_local}. Similar to the visualization on the global model, we observe dimensional collapse on representations produced by local models. With this observation, we establish the connection between dimensional collapse of the global model and local models.
To further understand the dimensional collapse on local models, we analyze the gradient flow dynamics of local training in Sec.~\ref{gdd_analysis}. Interestingly, we show theoretically that heterogeneous data drive the weight matrices of the local models to be biased to being low-rank, which further results in representation dimensional collapse.

% To mitigate such hazard, we propose FedDecorr, and Our method is very good.
Inspired by the observations that dimensional collapse of the global model stems from local models, we consider mitigating dimensional collapse during local training in Sec.~\ref{mitigate}.
In particular, we propose a novel federated learning method termed {\sc FedDecorr}. {\sc FedDecorr} adds a regularization term during local training to encourage the Frobenius norm of the correlation matrix of representations to be small. We show theoretically and empirically that this proposed regularization term can effectively mitigate dimensional collapse (see Fig.~\ref{figure1}(d) for example).
Next, in Sec.~\ref{expts}, through extensive experiments on standard benchmark datasets including CIFAR10, CIFAR100, and TinyImageNet, we show that {\sc FedDecorr} consistently improves over baseline federated learning methods.
In addition, we find that {\sc FedDecorr} yields more dramatic improvements in more challenging federated learning setups such as stronger heterogeneity or more number of clients.
Lastly, {\sc FedDecorr} has extremely low computation overhead and can be built on top of any existing federated learning baseline methods, which makes it widely applicable.

Our contributions are summarized as follows. First, we discover through experiments that stronger data heterogeneity in federated learning leads to greater dimensional collapse for global and local models. Second, we develop a theoretical understanding of the dynamics behind our empirical discovery that connects data heterogeneity and dimensional collapse.
Third, based on the motivation of mitigating dimensional collapse, we propose a novel method called {\sc FedDecorr}, which yields consistent improvements while being implementation-friendly and computationally-efficient.

\begin{figure*}
    \centering
    \includegraphics[width=0.95\textwidth]{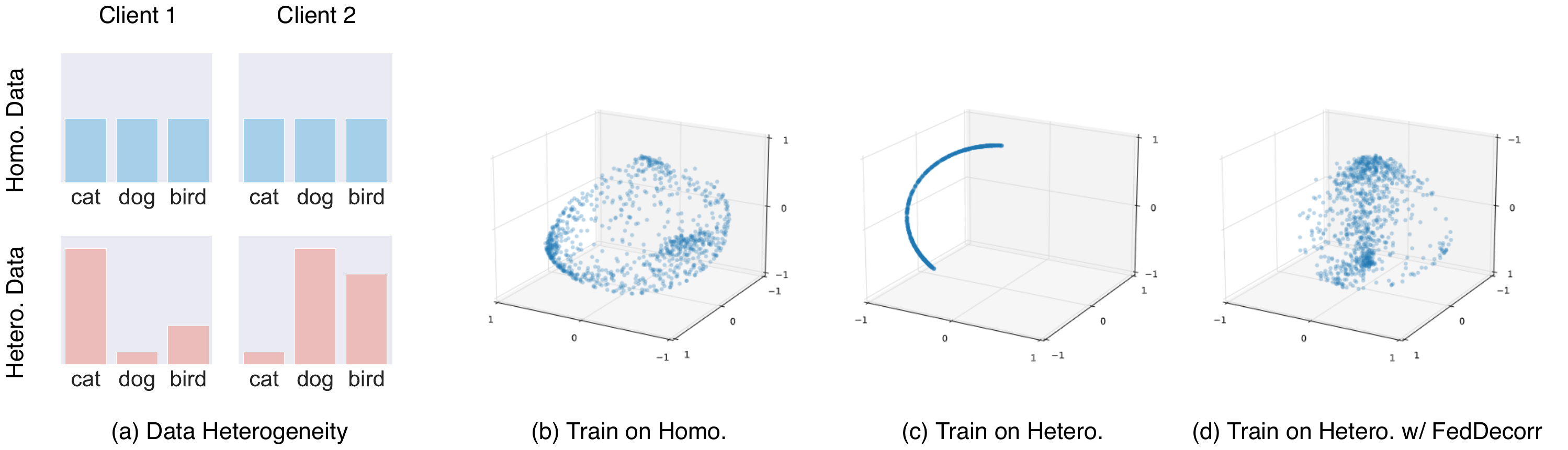}
    \vspace{-0.3cm}
    \caption{\textbf{(a)} illustrates data heterogeneity in terms of number of samples per class. \textbf{(b)}, \textbf{(c)}, \textbf{(d)} show representations (normalized to the unit sphere) of global models trained under homogeneous data, heterogeneous data, and heterogeneous data with {\sc FedDecorr}, respectively.
    Only \textbf{(c)} suffers dimensional collapse.
    \textbf{(b)}, \textbf{(c)}, \textbf{(d)} are produced with ResNet20 on CIFAR10. Best viewed in color.}
    \label{figure1}
    \vspace{-0.4cm}
\end{figure*}

%Specifically, in every round of FedAvg, the centralized server first send a copy of global model to each of the client. Then, based on the received global model, each client train the model on their own local training data. Next, all clients send the locally trained models back to the server. Finally, the server aggregates received local models to update the global model.
\vspace{-0.1cm}
\section{Related Works}
\vspace{-0.1cm}
\textbf{Federated Learning.} \cite{mcmahan2017communication} proposed FedAvg, which adopts a simple averaging scheme to aggregate local models into the global model.
% local training, global aggregation
However, under data heterogeneity, FedAvg suffers from unstable and slow convergence, resulting in performance degradation. To tackle this challenge, previous works either improve local training \citep{li2021model,li2020federated,karimireddy2020scaffold,acar2021federated,al2020federated,wang2021addressing} or global aggregation \citep{wang2020tackling,hsu2019measuring,luo2021no,wang2020federated,lin2020ensemble,reddi2020adaptive,wang2020federated}.
Most of these methods focus on the model parameter space, which may result in high computation or communication cost due to deep neural networks being over-parameterized.
 \cite{li2021model} focuses on model representations and uses a contrastive loss to maximize agreements between representations of local models and the global model. However, one drawback of \cite{li2021model} is that it requires additional forward passes during training, which almost doubles the training cost.
In this work, based on our study of how data heterogeneity affects model representations, we propose an effective yet highly efficient method to handle heterogeneous data.
% mentioning personalized here
Another research trend is in personalized federated learning \citep{arivazhagan2019federated,li2021ditto,fallah2020personalized,t2020personalized,hanzely2020lower,huang2021personalized,zhang2020personalized}, which aims to train personalized local models for each client. In this work, however, we focus on the typical setting that aims to train one global model for all clients.

% Dimensional Collapse
\textbf{Dimensional Collapse.}
Dimensional collapse of representations has been studied in metric learning \citep{roth2020revisiting}, self-supervised learning \citep{jing2021understanding}, and class incremental learning \citep{shi2022mimicking}. In this work, we focus on federated learning and discover that stronger data heterogeneity causes a higher degree of dimensional collapse for locally trained models.
To the best of our knowledge, this work is the first to discover and analyze dimensional collapse of representations in federated learning.

\textbf{Gradient Flow Dynamics.} 
\cite{arora2018optimization,arora2019implicit} introduce the gradient flow dynamics framework to analyze the dynamics of multi-layer linear neural networks under the $\ell_2$-loss and find deeper neural networks biasing towards low-rank solution during optimization. Following their works, \cite{jing2021understanding} finds two factors that cause dimensional collapse in self-supervised learning, namely strong data augmentation and implicit regularization from depth. Differently, we focus on federated learning with the cross-entropy loss. More importantly, our analysis focuses on dimensional collapse caused by data heterogeneity in federated learning instead of depth of neural networks.

\textbf{Feature Decorrelation.}
Feature decorrelation had been used for different purposes, such as preventing mode collapse in self-supervised learning \citep{bardes2021vicreg,zbontar2021barlow,hua2021feature}, boosting generalization \citep{cogswell2015reducing,huang2018decorrelated,xiong2016regularizing}, and improving class incremental learning \citep{shi2022mimicking}.
We instead apply feature decorrelation to counter the undesired dimensional collapse caused by data heterogeneity in federated learning.

\section{Dimensional Collapse Caused by Data Heterogeneity}
\label{dim_collapse_sec}
In this section, we first empirically visualize and compare representations of global models trained under different degrees of data heterogeneity in Sec.~\ref{emp_obs_global}. Next, to better understand the observations on global models, we analyze representations of local models in Sec.~\ref{emp_obs_local}. Finally, to theoretically understand our observations, we analyze the gradient flow dynamics of local training in Sec.~\ref{gdd_analysis}.

\subsection{Empirical Observations on the Global Model}
\label{emp_obs_global}
We first empirically demonstrate that stronger data heterogeneity causes more severe dimensional collapse on the global model.
Specifically, we first separate the training samples of CIFAR100 into $10$ splits, each corresponding to the local data of one client. 
To simulate data heterogeneity among clients as in previous works \citep{yurochkin2019bayesian,wang2020federated,li2021model}, we sample a probability vector $\mathbf{p}_{c} = (p_{c,1},p_{c,2},\ldots, p_{c,K}) \sim \mathrm{Dir}_{K}(\alpha)$ and allocate a ${p}_{c,k}$ proportion of instances of class $c \in [C]=\{1,2,\ldots, C\}$ to client $k \in [K]$, where $\mathrm{Dir}_{K}(\alpha)$ is the Dirichlet distribution with $K$ categories and $\alpha$ is the concentration parameter. A smaller $\alpha$ implies stronger data heterogeneity ($\alpha=\infty$ corresponds to the homogeneous setting).
We let $\alpha \in \{0.01 ,  0.05 ,  0.25 , \infty\}$.

\begin{figure}
    \centering
    \includegraphics[width=0.9\textwidth]{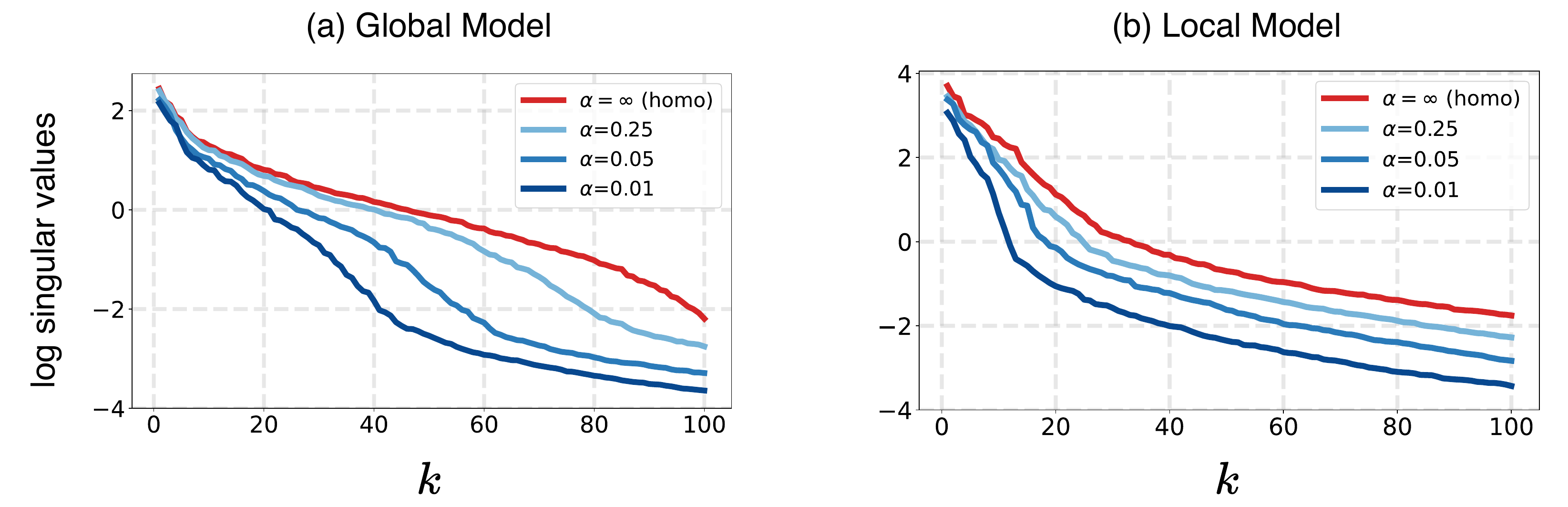}
    \vspace{-3mm}
    \caption{Data heterogeneity causes dimensional collapse on \textbf{(a) global models} and \textbf{(b) local models.} We plot the singular values of the  covariance matrix of representations in descending order. The $x$-axis ($k$) is the index of singular values and the $y$-axis is the logarithm of the singular values.}
    \vspace{-3mm}
    \label{dim_collapse_noniid}
\end{figure}

For each of the settings generated by different $\alpha$'s, we apply FedAvg \citep{mcmahan2017communication} to train a MobileNetV2~\citep{sandler2018mobilenetv2} with CIFAR100 (observations on other federated learning methods, model architectures, or datasets are similar and are provided in Appendix~\ref{vis_global_other}).
Next, for each of the four trained global models, we compute the covariance matrix  $\Sigma=\frac{1}{N}\sum_{i=1}^{N}(\mathbf{z}_{i} - \bar{\mathbf{z}})(\mathbf{z}_{i} - \bar{\mathbf{z}})^{\top}$ of the representations over the $N$ test data points in CIFAR100. Here $\mathbf{z}_i$ is the $i$-th test data point and $\bar{\mathbf{z}} = \frac{1}{N}\sum_{i=1}^{N}\mathbf{z}_{i}$ is their average.

Finally, we apply the singular value decomposition (SVD) on each of the covariance matrices and visualize the top $100$ singular values in Fig.~\ref{dim_collapse_noniid}(a).
If we define a small value $\tau$ as the threshold for a singular value to be {\em significant} (e.g., $\log{\tau}=-2$), we observe that for the homogeneous setting, almost all the singular values are significant, i.e., they surpass  $\tau$.
However, as $\alpha$ decreases, the number of singular values exceeding $\tau$ monotonically decreases. This implies that with stronger heterogeneity among local training data, the representation vectors produced by the trained global model tend to reside in a lower-dimensional space, corresponding to more severe dimensional collapse.

\subsection{Empirical Observations on Local Models}
\label{emp_obs_local}
Since the global model is obtained by aggregating locally trained models on each client, we conjecture that the dimensional collapse observed on the global model stems from the dimensional collapse of local models.
To further validate our conjecture, we continue to study whether increasing data heterogeneity will also lead to more severe dimensional collapse on locally trained models.

Specifically, for different $\alpha$'s, we visualize the locally trained model of one client (visualizations on local models of other clients are similar and are provided in Appendix~\ref{app:visual}). Following the same procedure as in Sec.~\ref{emp_obs_global}, we plot the singular values of covariance matrices of representations produced by the local models.
We observe from Fig.~\ref{dim_collapse_noniid}(b) that locally trained models demonstrate the same trend as the global models---namely, that the presence of stronger data heterogeneity causes more severe dimensional collapse. These experiments corroborate that the global model inherit the adverse dimensional collapse phenomenon from the local models.

\subsection{A Theoretical Explanation for Dimensional Collapse}
\label{gdd_analysis}
Based on the empirical observations in Sec.~\ref{emp_obs_global} and Sec.~\ref{emp_obs_local}, we now develop a theoretical understanding to explain why heterogeneous training data causes dimensional collapse for the learned representations.

Since we have established that the dimensional collapse of global model stems from local models, we focus on studying local models in this section.
Without loss of generality, we study local training of one arbitrary client.
Specifically, we first analyze the gradient flow dynamics of the model weights during the local training.
This analysis shows how heterogeneous local training data drives the model weights towards being low-rank, which leads to dimensional collapse for the representations.

\subsubsection{Setups and Notations}
We denote the number of training samples as $N$, the dimension of input data as $d_{\mathrm{in}}$, and total number of classes as $C$. The $i$-th sample is denoted as $X_{i} \in \mathbb{R}^{d_{\mathrm{in}}}$, and its corresponding one-hot encoded label is $\mathbf{y}_{i} \in \mathbb{R}^{C}$. The collection of all $N$ training samples is denoted as $X = [X_1, X_2 \ldots, X_N] \in \mathbb{R}^{d_{\mathrm{in}} \times N}$, and the $N$ one-hot encoded training labels are denoted as $\mathbf{y} = [\mathbf{y}_1, \mathbf{y}_2,\ldots, \mathbf{y}_N]  \in \mathbb{R}^{C \times N}$.

For simplicity in exposition, we follow \cite{arora2018optimization,arora2019implicit} and \cite{jing2021understanding} and analyze linear neural networks (without nonlinear activation layers).
We consider an $(L+1)$-layer (where $L \geq 1$) linear neural network trained using the cross entropy loss under gradient flow (i.e., gradient descent with an infinitesimally small learning rate). The weight matrix of the $i$-th layer ($i\in [L+1]$) at the optimization time step $t$ is denoted as $W_{i}(t)$. The dynamics can be expressed as 
\begin{equation}
    \dot{W}_{i}(t) = - \frac{\partial}{\partial W_i} \ell(W_{1}(t),\ldots,W_{L+1}(t)) ,
\end{equation}
where $\ell$ denotes the cross-entropy loss.

In addition, at the optimization time step $t$ and given the input data $X_{i}$, we denote $\mathbf{z}_{i}(t) \in \mathbb{R}^{d}$ as the output representation vector ($d$ being the dimension of the representations) and $\gamma_{i}(t) \in \mathbb{R}^{C}$  as the output softmax probability vector. We have
\begin{equation}
    \gamma_{i}(t) = \mathrm{softmax}(W_{L+1}(t)\mathbf{z}_{i}(t)) = \mathrm{softmax}(W_{L+1}(t)W_{L}(t)\ldots W_{1}(t)X_{i}). \label{eqn:softmax}
\end{equation}

We define $\mu_{c} = \frac{N_{c} }{ N}$, where $N_{c}$ is number of data samples belonging to class $c$. We denote $\mathbf{e}_{c}$ as the $C$-dimensional one-hot vector where only the $c$-th entry is $1$ (and the others are $0$). In addition, let $\bar{\gamma}_{c}(t) = \frac{1}{N_{c}}\sum_{i=1}^{N}\gamma_{i}(t)\mathbbm{1}\{\mathbf{y}_{i} = \mathbf{e}_{c}\}$ and $\bar{X}_{c} = \frac{1}{N_{c}}\sum_{i=1}^{N}X_{i}\mathbbm{1}\{\mathbf{y}_{i} = \mathbf{e}_{c}\}$.

\newcommand{\Vr}{\Pi}

\subsubsection{Analysis on Gradient Flow Dynamics}
\label{gd_analysis_sub}
Since our goal is to analyze model representations $\mathbf{z}_{i}(t)$, we focus on weight matrices that directly produce representations (i.e., the first $L$ layers). We denote the product of the weight matrices of the first $L$ layers as $\Vr(t) =  W_{L}(t)W_{L-1}(t)\ldots W_{1}(t)$ and analyze the behavior of $\Vr(t)$ under the gradient flow dynamics. In particular, we derive the following result for the singular values of~$\Vr(t)$.
\begin{theorem}[Informal]
\label{multilayer}
Assuming that the mild conditions as stated in Appendix \ref{sec:assumptions} hold. Let $\sigma_k(t)$ for $k \in [d]$ be the $k$-th largest singular value of $\Vr(t)$. Then,
\begin{equation}
\label{main_theorem}
    \dot{\sigma}_{k}(t) = N L\, (\sigma_{k}(t))^{2-\frac{2}{L}} \sqrt{(\sigma_{k}(t))^{\frac{2}{L}} + M}\, (\mathbf{u}_{L+1, k}(t))^{\top}G(t)\mathbf{v}_{k}(t),
\end{equation}
where $\mathbf{u}_{L+1, k} (t)$ is the $k$-th left singular vector of $W_{L+1}(t)$, $\mathbf{v}_{k}(t)$ is the $k$-th right singular vector of $\Vr(t)$, $M$ is a constant, and $G(t)$ is defined as
\begin{equation}
\label{G_def}
G(t) = \sum_{c=1}^{C}\mu_{c}(\mathbf{e}_{c}-\bar{\gamma}_{c}(t))\bar{X}_{c}^{\top},
\end{equation}
where $\mu_{c}$, $\mathbf{e}_{c}$, $\bar{\gamma}_c(t)$, $\bar{X}_c$ are defined after Eqn.~\eqref{eqn:softmax}.
\end{theorem}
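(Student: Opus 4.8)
The plan is to follow the gradient-flow/balancedness framework of \cite{arora2018optimization,arora2019implicit}, adapted to the cross-entropy loss and to the fact that the object of interest is the sub-product $\Vr(t)=W_{L}(t)\cdots W_{1}(t)$ rather than the full end-to-end map. First I would differentiate the loss. Because softmax composed with cross-entropy has derivative $\gamma_{i}-\mathbf{y}_{i}$ at the logits, back-propagating through the linear layers shows that $\partial_{W_{j}}\ell$ factorizes as $(W_{L}\cdots W_{j+1})^{\top}W_{L+1}^{\top}\,\widetilde{G}(t)\,(W_{j-1}\cdots W_{1})^{\top}$, where $\widetilde{G}(t)=\sum_{i}(\gamma_{i}(t)-\mathbf{y}_{i})X_{i}^{\top}$ is the error signal arriving at the representation block. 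Grouping by class and using the exact identity $\sum_{i}\mathbf{y}_{i}X_{i}^{\top}=N\sum_{c}\mu_{c}\mathbf{e}_{c}\bar{X}_{c}^{\top}$, together with the mild conditions of Appendix~\ref{sec:assumptions} (which let me replace the per-sample softmax outputs by their within-class averages $\bar{\gamma}_{c}$), collapses this to $\widetilde{G}(t)=-N\,G(t)$ with $G(t)$ as in \eqref{G_def}. This already explains the prefactor $N$ and the sign $\mathbf{e}_{c}-\bar{\gamma}_{c}$.

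Next I would invoke the conservation law of gradient flow: $\tfrac{d}{dt}\bigl(W_{i+1}^{\top}W_{i+1}-W_{i}W_{i}^{\top}\bigr)=0$, so these differences are frozen at their initial values. The mild conditions impose balanced initialization among the first $L$ layers (all such differences zero) and $W_{L+1}^{\top}W_{L+1}-W_{L}W_{L}^{\top}=M\,I$. Balancedness among layers $1,\dots,L$ forces a single chained family of singular vectors and a common singular-value matrix $S$, yielding $\Vr=U_{L}S^{L}V_{1}^{\top}$, hence $\sigma_{k}=s_{k}^{L}$ (equivalently $s_{k}=\sigma_{k}^{1/L}$), with left singular vectors $\mathbf{u}_{k}$ (columns of $U_{L}$) and right singular vectors $\mathbf{v}_{k}$ (columns of $V_{1}$). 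The same chaining gives the matrix-power identities $(W_{L}\cdots W_{j+1})(W_{L}\cdots W_{j+1})^{\top}=(\Vr\Vr^{\top})^{(L-j)/L}$ and $(W_{j-1}\cdots W_{1})^{\top}(W_{j-1}\cdots W_{1})=(\Vr^{\top}\Vr)^{(j-1)/L}$.

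I would then differentiate the product, $\dot{\Vr}=\sum_{j=1}^{L}(W_{L}\cdots W_{j+1})\dot{W}_{j}(W_{j-1}\cdots W_{1})$, substitute $\dot{W}_{j}=-\partial_{W_{j}}\ell$, and apply the power identities to obtain the preconditioned flow
\[
\dot{\Vr}=-\sum_{j=1}^{L}(\Vr\Vr^{\top})^{(L-j)/L}\,W_{L+1}^{\top}\widetilde{G}(t)\,(\Vr^{\top}\Vr)^{(j-1)/L}.
\]
Projecting onto singular directions via $\dot{\sigma}_{k}=\mathbf{u}_{k}^{\top}\dot{\Vr}\,\mathbf{v}_{k}$ (valid because the conditions guarantee simple singular values, so the SVD is differentiable), each summand contributes $\sigma_{k}^{2(L-j)/L}\cdot\sigma_{k}^{2(j-1)/L}=\sigma_{k}^{2-2/L}$, so the $j$-sum produces the factor $L$. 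Finally I handle the last layer: from $W_{L+1}^{\top}W_{L+1}=W_{L}W_{L}^{\top}+M\,I=U_{L}(S^{2}+M I)U_{L}^{\top}$, the right singular vectors of $W_{L+1}$ are exactly the $\mathbf{u}_{k}$ and its singular values are $\sqrt{s_{k}^{2}+M}=\sqrt{\sigma_{k}^{2/L}+M}$, whence $W_{L+1}\mathbf{u}_{k}=\sqrt{\sigma_{k}^{2/L}+M}\,\mathbf{u}_{L+1,k}$. Substituting this and $\widetilde{G}=-N G$ converts $\mathbf{u}_{k}^{\top}W_{L+1}^{\top}\widetilde{G}\,\mathbf{v}_{k}$ into $-N\sqrt{\sigma_{k}^{2/L}+M}\,(\mathbf{u}_{L+1,k})^{\top}G\,\mathbf{v}_{k}$ and gives exactly \eqref{main_theorem}.

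The main obstacle is bookkeeping rather than a single hard estimate: keeping the chained singular-vector structure exact through both the preconditioner identities and the last-layer step, and in particular tracking how the non-balanced offset $M$ at layer $L+1$ survives as the $\sqrt{\sigma_{k}^{2/L}+M}$ factor while the inner $L$ layers remain balanced. The other delicate point is the reduction of the per-sample gradient $\widetilde{G}$ to the class-mean form $G$, which relies essentially on the within-class averaging in the stated conditions; and the projection formula $\dot{\sigma}_{k}=\mathbf{u}_{k}^{\top}\dot{\Vr}\,\mathbf{v}_{k}$ requires simple singular values, so the genericity/separation hypotheses in Appendix~\ref{sec:assumptions} must be used to justify differentiability of the SVD along the trajectory.
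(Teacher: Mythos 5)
Your proposal is correct and reaches the stated formula, and its skeleton matches the paper's: the factorized cross-entropy gradient, the balancedness-induced preconditioned flow $\dot{\Pi}(t)=-\sum_{j=1}^{L}(\Pi\Pi^{\top})^{\frac{L-j}{L}}\,\partial_{\Pi}\ell\,(\Pi^{\top}\Pi)^{\frac{j-1}{L}}$ (the paper's Lemma~2, after Arora et al.), the projection $\dot{\sigma}_{k}=\mathbf{u}_{k}^{\top}\dot{\Pi}\mathbf{v}_{k}$ (the paper's Lemma~1), and the rewriting of $(\mathbf{y}-\gamma)X^{\top}$ as $N G(t)$. Where you genuinely diverge is the last layer. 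The paper never assumes $W_{L+1}^{\top}(0)W_{L+1}(0)-W_{L}(0)W_{L}^{\top}(0)=M I$; its second assumption is dynamical: the left singular vectors of $\Pi(t)$ and the right singular vectors of $W_{L+1}(t)$ remain aligned for \emph{all} $t$ (justified by citing \cite{ji2018gradient} and an empirical plot). Under that assumption it applies Lemma~1 twice --- to $\Pi$ and to $W_{L+1}$ --- obtains the coupled ODEs $\frac{1}{L}\dot{\sigma}_{k}\sigma_{k}^{2/L-1}=\sigma_{L+1,k}\dot{\sigma}_{L+1,k}$, and integrates to get $\sigma_{L+1,k}^{2}=\sigma_{k}^{2/L}+M$, with $M$ arising as a constant of integration. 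Your conservation-law route buys two real advantages: the alignment is \emph{derived} rather than assumed along the trajectory, and $M$ is manifestly the same constant for every $k$ (in the paper's integration it is, strictly, a per-$k$ constant written as a single symbol; only your reading matches the theorem's single $M$). The cost is a more rigid hypothesis: balancedness-with-offset at the last layer is a measure-zero initialization condition that generic random initializations violate, and when the number of classes $C$ is smaller than the representation dimension $d$ it forces $W_{L}W_{L}^{\top}+MI$ to have rank at most $C$, hence $M\le 0$ and a degenerate spectrum for $W_{L}$; the paper's trajectory-level assumption avoids this.

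Two smaller remarks. First, you attribute the replacement of per-sample softmax outputs by class means to the conditions of Appendix~\ref{sec:assumptions}; in fact those assumptions say nothing about this, and the paper asserts $(\mathbf{y}-\gamma)X^{\top}=N\sum_{c}\mu_{c}(\mathbf{e}_{c}-\bar{\gamma}_{c})\bar{X}_{c}^{\top}$ as an exact identity. Only the label part is exact; the softmax part requires the within-class cross-covariance $\sum_{i:\mathbf{y}_{i}=\mathbf{e}_{c}}(\gamma_{i}-\bar{\gamma}_{c})(X_{i}-\bar{X}_{c})^{\top}$ to vanish. Your instinct that something must be assumed here is sound, but this gap is inherited from the paper itself rather than covered by its stated conditions. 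Second, your appeal to simple singular values to justify differentiability of the SVD is a reasonable regularity requirement that the paper's Lemma~1 also uses implicitly.
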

The proof of the precise version of Theorem~\ref{multilayer} is provided in Appendix~\ref{app:proof_thm}.

Based on Theorem~\ref{multilayer}, we are able to explain why greater data heterogeneity causes $\Vr(t)$ to be biased to become lower-rank.
Note that strong data heterogeneity causes local training data of one client being highly imbalanced in terms of the number of data samples per class (recall Fig.~\ref{figure1}(a)).
This implies that $\mu_{c}$, which is the proportion of the class $c$ data, will be close to~$0$ for some classes.

Next, based on the definition of $G(t)$ in Eqn.~\eqref{G_def}, more $\mu_{c}$'s being close to~$0$ leads to $G(t)$ being biased towards a low-rank matrix.
If this is so, the term $(\mathbf{u}_{L+1, k}(t))^{\top}G(t)\mathbf{v}_{k}(t)$ in Eqn.~\eqref{main_theorem} will only be significant (large in magnitude) for fewer values of $k$. This is because $\mathbf{u}_{L+1, k}(t)$ and $\mathbf{v}_{k}(t)$ are both singular vectors, which are orthogonal among different $k$'s.
This further leads to $\dot{\sigma}_{k}(t)$ on the  left-hand side of Eqn.~\eqref{main_theorem}, which is the evolving rate of $\sigma_{k}$, being small for most of the $k$'s throughout  training.
These observations imply that only relatively few singular values of $\Vr(t)$ will increase significantly after training.
%while other singular values remain to be close to their small initial values.

Furthermore, $\Vr(t)$ being biased towards being low-rank will directly lead to dimensional collapse for the representations. To see this, we simply write the covariance matrix of the representations in terms of $\Vr(t)$ as
\begin{equation}
\label{repres_cov}
    \Sigma(t) = \frac{1}{N}\sum_{i=1}^{N}(\mathbf{z}_{i}(t) - \bar{\mathbf{z}}(t))(\mathbf{z}_{i}(t) - \bar{\mathbf{z}}(t))^{\top} = \Vr(t)\bigg(\frac{1}{N}\sum_{i=1}^{N}(X_{i}-\bar{X})(X_{i}-\bar{X})^{\top}\bigg)\Vr(t)^{\top}.
\end{equation}

From Eqn.~\eqref{repres_cov}, we observe that if $\Vr(t)$ evolves to being a lower-rank matrix, $\Sigma(t)$ will also tend to be lower-rank, which corresponds to the stronger dimensional collapse observed in Fig.~\ref{dim_collapse_noniid}(b).

\begin{figure}
    \centering
    \includegraphics[width=0.97\textwidth]{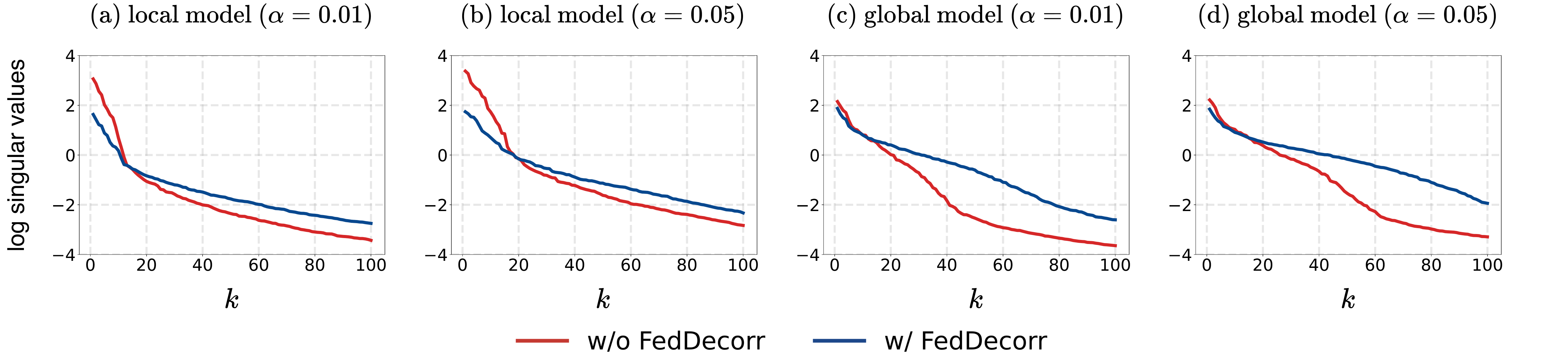}
    \vspace{-3mm}
    \caption{{\sc FedDecorr} effectively mitigates dimensional collapse for \textbf{(a-b) local models and (c-d) global models}. For each  heterogeneity parameter $\alpha \in \{0.01,0.05\}$, we apply {\sc FedDecorr} and plot the singular values of the representation covariance matrix. The $x$-axis ($k$) is the index of singular values.
    With {\sc FedDecorr}, the tail singular values are prevented from dropping to $0$ too rapidly.
    }
    \label{decorr_effect}
    \vspace{-1mm}
\end{figure}

\section{Mitigating Dimensional Collapse with {\sc FedDecorr}} \label{mitigate}
\vspace{-2mm}
Motivated by the above observations and analyses on dimensional collapse caused by data heterogeneity in federated learning, we explore how to mitigate excessive dimensional collapse.

Since dimensional collapse on the global model is inherited from local models, we propose to alleviate the problem during local training.
One natural way to achieve this is to add the following regularization term on the representations during training
\begin{equation}
    L_{\mathrm{singular}}(w,X) = \frac{1}{d}\sum_{i=1}^{d}\bigg(\lambda_{i} - \frac{1}{d}\sum_{j=1}^{d}\lambda_{j}\bigg)^{2},
\end{equation}
where $\lambda_{i}$ is the $i$-th singular value of the covariance matrix of the representations.
Essentially, $L_{\mathrm{singular}}$ penalizes the variance among the singular values, thus discouraging the tail singular values from collapsing to $0$, mitigating dimensional collapse. However, this regularization term is not practical as it requires calculating all the singular values, which is computationally expensive.

Therefore, to derive a computationally-cheap training objective, we first apply the z-score normalization on all the representation vectors $\mathbf{z}_{i}$ as follows: $\hat{\mathbf{z}}_{i} = (\mathbf{z}_{i}-\bar{\mathbf{z}})/\sqrt{\mathrm{Var}(\mathbf{z})}$. This results in the covariance matrix of $\hat{\mathbf{z}}_{i}$ being equal to its correlation matrix (i.e., the matrix of correlation coefficients). The following proposition suggests a more convenient cost function to regularize.
\begin{proposition}
\label{prop_decorr}
For a $d$-by-$d$ correlation matrix $K$ with singular values $(\lambda_{1},\ldots,\lambda_{d})$, we have:
\begin{equation}
   \sum_{i=1}^{d}\bigg(\lambda_{i} - \frac{1}{d}\sum_{j=1}^{d}\lambda_{j}\bigg)^{2} = \|K\|_{\mathrm{F}}^{2} - d. \label{eqn:prop_decorr}
\end{equation}
\end{proposition}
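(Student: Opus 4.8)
The plan is to exploit the two defining structural features of a correlation matrix $K$: it is symmetric positive semidefinite, and its diagonal entries are all equal to $1$. The first feature lets me identify the singular values with the eigenvalues, so that standard trace identities apply; the second feature pins down the sum of the singular values. Together these reduce the identity to a one-line expansion of the quadratic on the left-hand side.

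First I would record the two key scalar quantities. Since $K$ is symmetric and positive semidefinite, its singular values coincide with its (nonnegative) eigenvalues, so $\sum_{i=1}^d \lambda_i = \tr(K)$. Because every diagonal entry of a correlation matrix is $1$, we have $\tr(K) = d$, and hence the mean of the singular values is exactly $\frac{1}{d}\sum_{j=1}^d \lambda_j = 1$. Separately, the Frobenius norm satisfies $\|K\|_{\mathrm{F}}^2 = \sum_{i,j} K_{ij}^2 = \sum_{i=1}^d \lambda_i^2$, using that the squared Frobenius norm equals the sum of squared singular values.

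Next I would simply expand the left-hand side of Eqn.~\eqref{eqn:prop_decorr}. Writing $\bar\lambda = \frac{1}{d}\sum_{j=1}^d \lambda_j$, the standard variance-type identity gives
\begin{equation}
\sum_{i=1}^d (\lambda_i - \bar\lambda)^2 = \sum_{i=1}^d \lambda_i^2 - d\,\bar\lambda^2 .
\end{equation}
Substituting $\bar\lambda = 1$ and $\sum_{i=1}^d \lambda_i^2 = \|K\|_{\mathrm{F}}^2$ from the previous step yields $\sum_{i=1}^d (\lambda_i - \bar\lambda)^2 = \|K\|_{\mathrm{F}}^2 - d$, which is exactly the claim.

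There is no serious obstacle here; the result is essentially a bookkeeping computation once the right facts are assembled. The only point requiring a moment's care is justifying that singular values equal eigenvalues and that the trace equals $d$, both of which follow immediately from the positive semidefiniteness and unit-diagonal structure of a correlation matrix. I would state these two observations explicitly at the outset so that the subsequent algebra is transparent.
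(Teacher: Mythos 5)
Your proof is correct and follows essentially the same route as the paper's: both use the unit diagonal to get $\tr(K)=d$, identify the sum of singular values with the trace via positive semidefiniteness, equate $\|K\|_{\mathrm{F}}^{2}$ with $\sum_{i}\lambda_{i}^{2}$, and finish by expanding the quadratic. The only cosmetic difference is that you invoke the variance identity with $\bar\lambda=1$ while the paper expands $(\lambda_i-1)^2$ directly and verifies the Frobenius-norm identity via an explicit SVD, but these are the same computation.
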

The proof of Proposition~\ref{prop_decorr} can be found in Appendix~\ref{app:proof_prop}.
This proposition suggests that regularizing the Frobenius norm of the correlation matrix  $\|K\|_{\mathrm{F}}$ achieves the same effect as minimizing $L_{\mathrm{singular}}$. In contrast to the singular values, $\|K\|_{\mathrm{F}}$ can be computed efficiently. 

To leverage this proposition, we propose a novel method, {\sc FedDecorr}, which regularizes the Frobenius norm of the correlation matrix of the representation vectors during local training on each client. Formally, the proposed regularization term is defined as:
\begin{equation}
    L_{\mathrm{FedDecorr}}(w,X) = \frac{1}{d^{2}}\|K\|^{2}_{\mathrm{F}}, \label{eqn:LFedDecorr}
\end{equation}
where $w$ is the model parameters, $K$ is the correlation matrix of the representations. The overall objective of each local client is
\begin{equation}
\label{overall_obj}
    \min_{w} \ell(w,X,\mathbf{y}) + \beta L_{\mathrm{FedDecorr}}(w,X),
\end{equation}
where $\ell$ is the cross entropy loss, and $\beta$ is the regularization coefficient of {\sc FedDecorr}. The pseudocode of our method is provided in Appendix~\ref{app:pseudo}.

To visualize the effectiveness of $L_{\mathrm{FedDecorr}}$ in mitigating dimensional collapse, we now revisit the experiments of Fig.~\ref{dim_collapse_noniid} and apply $L_{\mathrm{FedDecorr}}$ under the heterogeneous setting where $\alpha \in \{0.01,0.05\}$. We plot our results in Fig.~\ref{decorr_effect} for both local and global models.
Figs.~\ref{decorr_effect}(a-b) show that for local models, {\sc FedDecorr} encourages the tail singular values to not collapse to $0$, thus effectively mitigating dimensional collapse.
Moreover, as illustrated in Figs.~\ref{decorr_effect}(c-d), this desirable effect introduced by {\sc FedDecorr} on local models can also be inherited by the global models.

\definecolor{hl}{gray}{0.9}
{
\renewcommand{\arraystretch}{1.0}{
\setlength{\tabcolsep}{0.4mm}{
\begin{table*}[t]
\centering
\begin{tabular}{lcccccccccc}
\toprule
\multirow{2.5}{*}{Method} & \multicolumn{4}{c}{CIFAR10} && \multicolumn{4}{c}{CIFAR100}  \\
\cmidrule{2-5}\cmidrule{7-10}
& $\alpha =0.05$ & $0.1 $ &  $ 0.5$ & $\infty$ &&  $0.05$ & $0.1$ & $0.5$ & $\infty$ \\
\midrule
\small{FedAvg} & 64.85\tiny{$\pm$2.01} & 76.28\tiny{$\pm$1.22} & 89.84\tiny{$\pm$0.13} & 92.39\tiny{$\pm$0.26} && 59.87\tiny{$\pm$0.25} & 66.46\tiny{$\pm$0.16} & 71.69\tiny{$\pm$0.15} & \textbf{74.54}\tiny{$\pm$0.15}  \\
\cellcolor{hl}{\small{+ {\sc FedDecorr}}} &
\cellcolor{hl}{73.06}\tiny{$\pm$0.81} & \cellcolor{hl}{80.60}\tiny{$\pm$0.91} & \cellcolor{hl}{89.84}\tiny{$\pm$0.05} &
\cellcolor{hl}{92.19}\tiny{$\pm$0.10} &\cellcolor{hl}{}& \cellcolor{hl}{\textbf{61.53}}\tiny{$\pm$0.11} & \cellcolor{hl}{\textbf{67.12}}\tiny{$\pm$0.09} &
\cellcolor{hl}{\textbf{71.91}}\tiny{$\pm$0.04} &
\cellcolor{hl}{73.87}\tiny{$\pm$0.18} \\
\midrule
\small{FedProx} & 64.11\tiny{$\pm$0.84} & 76.10\tiny{$\pm$0.40} & 89.57\tiny{$\pm$0.04} &
92.38\tiny{$\pm$0.09}
&& 60.02\tiny{$\pm$0.46} & 66.41\tiny{$\pm$0.27} & 71.78\tiny{$\pm$0.19} & 74.34\tiny{$\pm$0.03} \\
\cellcolor{hl}{\small{+ {\sc FedDecorr}}} &
\cellcolor{hl}{71.38}\tiny{$\pm$0.81} & \cellcolor{hl}{\textbf{81.74}}\tiny{$\pm$0.34} & \cellcolor{hl}{89.96}\tiny{$\pm$0.26} &
\cellcolor{hl}{92.14}\tiny{$\pm$0.20} &\cellcolor{hl}{}& \cellcolor{hl}{61.33}\tiny{$\pm$0.19} & \cellcolor{hl}{67.00}\tiny{$\pm$0.46} & \cellcolor{hl}{71.64}\tiny{$\pm$0.10} &
\cellcolor{hl}{74.15}\tiny{$\pm$0.06} \\
\midrule
\small{FedAvgM} & 71.34\tiny{$\pm$0.71} & 77.51\tiny{$\pm$0.58} & 88.39\tiny{$\pm$0.17} &
91.35\tiny{$\pm$0.15}
&& 59.64\tiny{$\pm$0.20} & 66.36\tiny{$\pm$0.14} & 71.17\tiny{$\pm$0.22} & 74.20\tiny{$\pm$0.16} \\
\cellcolor{hl}{\small{+ {\sc FedDecorr}}} &
\cellcolor{hl}{\textbf{73.60}}\tiny{$\pm$0.82} & \cellcolor{hl}{79.21}\tiny{$\pm$0.15} & \cellcolor{hl}{88.70}\tiny{$\pm$0.26} &
\cellcolor{hl}{91.33}\tiny{$\pm$0.13} &\cellcolor{hl}{}& \cellcolor{hl}{61.48}\tiny{$\pm$0.27} & \cellcolor{hl}{66.60}\tiny{$\pm$0.11} & \cellcolor{hl}{71.26}\tiny{$\pm$0.21} &
\cellcolor{hl}{73.86}\tiny{$\pm$0.25} \\
\midrule
\small{MOON} & 68.79\tiny{$\pm$0.69} & 78.70\tiny{$\pm$0.66} & 90.08\tiny{$\pm$0.10} &
92.62\tiny{$\pm$0.17}
&& 56.79\tiny{$\pm$0.17} & 65.48\tiny{$\pm$0.29} & 71.81\tiny{$\pm$0.14} & 74.30\tiny{$\pm$0.12} \\
\cellcolor{hl}{\small{+ {\sc FedDecorr}}} &
\cellcolor{hl}{73.46}\tiny{$\pm$0.84} & \cellcolor{hl}{81.63}\tiny{$\pm$0.55} & \cellcolor{hl}{\textbf{90.61}}\tiny{$\pm$0.05} &
\cellcolor{hl}{\textbf{92.63}}\tiny{$\pm$0.19} &\cellcolor{hl}{}& \cellcolor{hl}{59.43}\tiny{$\pm$0.34} & \cellcolor{hl}{66.12}\tiny{$\pm$0.20} & \cellcolor{hl}{71.68}\tiny{$\pm$0.05} &
\cellcolor{hl}{73.70}\tiny{$\pm$0.25} \\
\bottomrule
\end{tabular}
\vspace{-1mm}
\caption{\textbf{CIFAR10/100 Experiments.} We run experiments under various degrees of heterogeneity ($\alpha\in \{0.05, 0.1, 0.5, \infty\}$) and report the test accuracy (\%). All results are (re)produced by us and are averaged over $3$ runs (mean$\,\pm\, $std). Bold font highlights the highest accuracy in each column.}
\vspace{-2mm}
\label{main_cifar}
\end{table*}
}
}
}

\begin{wraptable}{R}{8cm}
\vspace{-1.1cm}
\centering
\definecolor{hl}{gray}{0.9}
\renewcommand{\arraystretch}{1.0}{
\setlength{\tabcolsep}{0.3mm}{
\begin{tabular}{lcccc}
\toprule
\multirow{2.5}{*}{Method} & \multicolumn{4}{c}{TinyImageNet} \\
\cmidrule{2-5}
& $\alpha=0.05$ & $0.1$ &  $0.5$ & $\infty$ \\
\midrule
\small{FedAvg} & 35.02\tiny{$\pm$0.46} & 39.30\tiny{$\pm$0.23} & 46.92\tiny{$\pm$0.25} & 49.33\tiny{$\pm$0.19} \\
\cellcolor{hl}{\small{+ {\sc FedDecorr}}} &
\cellcolor{hl}{40.29}\tiny{$\pm$0.18} & \cellcolor{hl}{43.86}\tiny{$\pm$0.50} & \cellcolor{hl}{50.01}\tiny{$\pm$0.27} &
\cellcolor{hl}{52.63}\tiny{$\pm$0.26} \\
\midrule
\small{FedProx} & 35.20\tiny{$\pm$0.30} & 39.66\tiny{$\pm$0.43} & 47.16\tiny{$\pm$0.07} &
49.76\tiny{$\pm$0.36} \\
\cellcolor{hl}{\small{+ {\sc FedDecorr}}} &
\cellcolor{hl}{\textbf{40.63}}\tiny{$\pm$0.05} & \cellcolor{hl}{44.19}\tiny{$\pm$0.14} & \cellcolor{hl}{50.26}\tiny{$\pm$0.27} &
\cellcolor{hl}{52.37}\tiny{$\pm$0.36} \\
\midrule
\small{FedAvgM} & 34.81\tiny{$\pm$0.09} & 39.72\tiny{$\pm$0.11} & 47.11\tiny{$\pm$0.04} &
49.67\tiny{$\pm$0.25} \\
\cellcolor{hl}{\small{+ {\sc FedDecorr}}} &
\cellcolor{hl}{39.97}\tiny{$\pm$0.23} & \cellcolor{hl}{43.95}\tiny{$\pm$0.26} & \cellcolor{hl}{50.14}\tiny{$\pm$0.11} &
\cellcolor{hl}{52.05}\tiny{$\pm$0.37} \\
\midrule
\small{MOON} & 35.23\tiny{$\pm$0.26} & 40.53\tiny{$\pm$0.28} & 47.25\tiny{$\pm$0.66} &
50.48\tiny{$\pm$0.57} \\
\cellcolor{hl}{\small{+ {\sc FedDecorr}}} &
\cellcolor{hl}{40.40}\tiny{$\pm$0.24} & \cellcolor{hl}{\textbf{44.20}}\tiny{$\pm$0.22} & \cellcolor{hl}{\textbf{50.81}}\tiny{$\pm$0.51} &
\cellcolor{hl}{\textbf{53.01}}\tiny{$\pm$0.45} \\

\bottomrule
\end{tabular}
\caption{\textbf{TinyImageNet Experiments.} We run with $\alpha\in  \{0.05,0.1,0.5,\infty\}$) and report the test accuracy (\%). All results are (re)produced by us and are averaged over $3$ runs (mean$\,\pm\,$std is reported). Bold font highlights the highest accuracy in each column.}
\label{main_tiny}
\vspace{-0.2cm}

}
}
\end{wraptable}

\section{Experiments} \label{expts}
% brief summary
%In this section, we first describe the experimental setups in Sec.~\ref{setup}. We then show that {\sc FedDecorr} consistently improves over existing methods in Sec.~\ref{main_exp_res}. Next, we demonstrate {\sc FedDecorr}'s ability to scale to large amount of clients via an ablation study in Sec.~\ref{text: ablation_num_clients}. Further, we vary $\beta$ to test {\sc FedDecorr}'s robustness to the choice of $\beta$ in Sec.~\ref{text: ablation_beta}. Finally, we ablate the number of local epochs per communication round in Sec.~\ref{text: ablation_local_epochs}.
%In addition, in appendix~\ref{app: additional_expt}, we present more ablation studies, discuss the computational advantage of {\sc FedDecorr}, and compare {\sc FedDecorr} to other decorrelation methods.

\subsection{Experimental Setups}
\label{setup}
\textbf{Datasets:}
We adopt three popular benchmark datasets, namely CIFAR10, CIFAR100, and TinyImageNet. CIFAR10 and CIFAR100 both have $50,000$ training samples and $10,000$ test samples, and the size of each image is $32 \times 32$. TinyImageNet contains 200 classes, with $100,000$ training samples and $10,000$ testing samples, and each image is $64\times 64$. The method generating local data for each client was introduced in Sec.~\ref{emp_obs_global}.

\textbf{Implementation Details:}
Our code is based on the code of \cite{li2021model}.
For all experiments, we use MobileNetV2 \citep{sandler2018mobilenetv2}.
We run $100$ communication rounds for all experiments on the CIFAR10/100 datasets and $50$ communication rounds on the TinyImageNet dataset.
We conduct local training for $10$ epochs in each communication round using SGD optimizer with a learning rate of $0.01$, a SGD momentum of $0.9$, and a batch size of $64$.
The weight decay is set to $10^{-5}$ for CIFAR10 and $10^{-4}$ for CIFAR100 and TinyImageNet. We apply the data augmentation of \cite{cubuk2018autoaugment} in all CIFAR100 and TinyImageNet experiments.
The $\beta$ of {\sc FedDecorr} (i.e., $\beta$ in Eqn.~\eqref{overall_obj}) is tuned to be $0.1$.
The details of tuning hyper-parameters for other federated learning methods are described in Appendix~\ref{app: baseline_hyperparam}.

%The $\mu$ of FedProx \citep{li2020federated} is tuned to $10^{-3}$; the $\mu$ of MOON \citep{li2021model} is tuned to $\mu=1.0$; the $\rho$ of FedAvgM \citep{hsu2019measuring} is tuned to $\rho=0.5$; 

% The $\mu$ of FedProx \citep{li2020federated} is tuned across $\{10^{-4}, 10^{-3}, 10^{-2}\}$ and is selected to be $10^{-3}$; the $\mu$ of MOON \citep{li2021model} is tuned across $\{0.1, 1.0, 5.0\}$ and is selected to be $\mu=1.0$; the $\rho$ of FedAvgM \citep{hsu2019measuring} is tuned across $\{0.1, 0.5, 0.9\}$ and is selected to be $\rho=0.5$; the $\beta$ of {\sc FedDecorr} (i.e., $\beta$ in Eqn.~\eqref{overall_obj}) is tuned to $0.1$.
% Following \cite{mcmahan2017communication,li2020federated,karimireddy2020scaffold,reddi2020adaptive,li2021model}, we do not split a validation set.

\begin{figure}
    \centering
    \includegraphics[width=\textwidth]{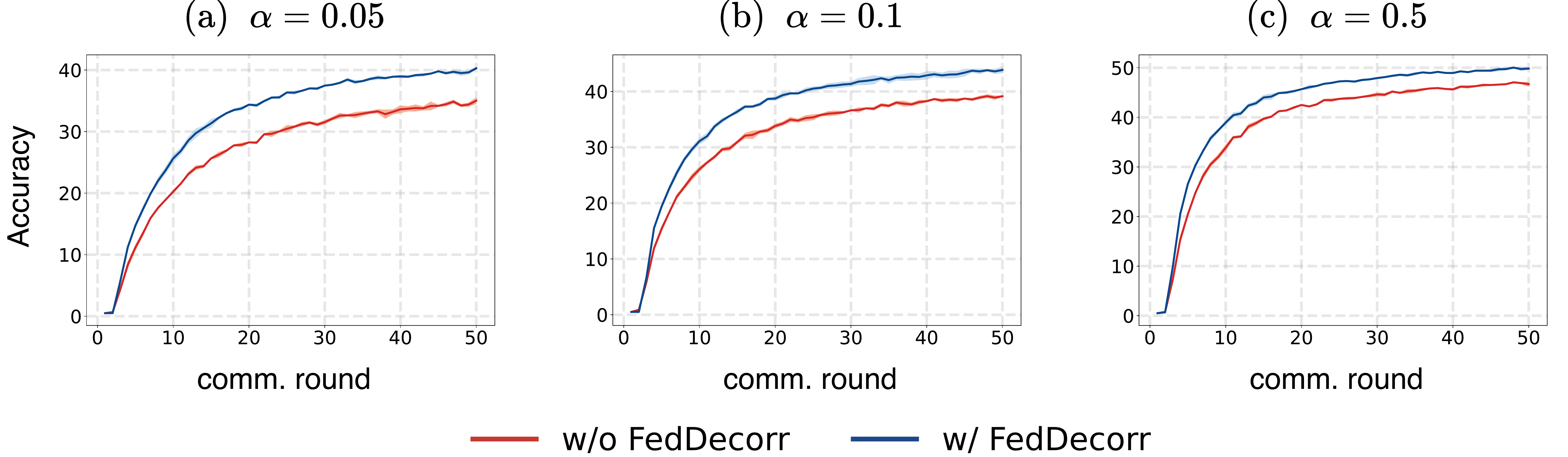}
    \vspace{-5mm}
    \caption{\textbf{Test accuracy (\%) at each communication round.} Results are averaged over 3 runs. Shaded areas denote one standard deviation above and below the mean.}
    \label{main_res_figure}
    \vspace{-3mm}
\end{figure}

% main experiments
\subsection{{\sc FedDecorr} Significantly Improves Baseline Methods}
\label{main_exp_res}
To validate the effectiveness of our method, we apply  {\sc FedDecorr} to four baselines, namely FedAvg \citep{mcmahan2017communication}, FedAvgM \citep{hsu2019measuring}, FedProx \citep{li2020federated}, and MOON \citep{li2021model}. We partition the three benchmark datasets (CIFAR10, CIFAR100, and TinyImageNet) into $10$ clients with $\alpha \in \{0.05,0.1,0.5,\infty\}$. Since $\alpha=\infty$ is the homogeneous setting where local models should be free from the pitfall of excessive dimensional collapse, we only expect  {\sc FedDecorr} to perform on par with the baselines in this setting.

We display the CIFAR10/100 results in Tab.~\ref{main_cifar} and the TinyImageNet results in Tab.~\ref{main_tiny}.
We observe that for all of the heterogeneous settings on all datasets, the highest accuracies are  achieved by adding  {\sc FedDecorr} on top of a certain baseline method.
In particular, in the strongly heterogeneous settings where $\alpha \in \{0.05,0.1\}$, adding  {\sc FedDecorr} yields significant improvements of around $\mathbf{2\% \sim 9\%}$ over baseline methods on all datasets.
On the other hand, for the less heterogeneous setting of $\alpha=0.5$, the problem of dimensional collapse is less pronounced as discussed in Sec~\ref{dim_collapse_sec}, leading to smaller improvements from {\sc FedDecorr}.
Such decrease in improvements is a general trend and is also observed on FedProx, FedAvgM, and MOON.
In addition, surprisingly, in the homogeneous setting of $\alpha=\infty$,  {\sc FedDecorr} still produces around 2\% of improvements on the TinyImageNet dataset.
We conjecture that this is because TinyImageNet is much more complicated than the  CIFAR datasets, and some other factors besides heterogeneity of label may cause undesirable dimensional collapse in the federated learning setup. Therefore, federated learning on TinyImageNet can benefit from  {\sc FedDecorr} even in the homogeneous setting.

\begin{wraptable}{r}{7cm}
\definecolor{hl}{gray}{0.9}
\setlength{\tabcolsep}{0.6mm}{
\renewcommand{\arraystretch}{1.0}{
\centering
\vspace{-0.3cm}
\begin{tabular}{ccccccc}
\toprule
\# clients & Method & $\alpha=0.05$ & $0.1$ &&& $0.5$ \\
\midrule
\multirow{2}{*}{$10$} & \small{FedAvg} & 35.02 & 39.30 &&& 46.92 \\
& \cellcolor{hl}{\small{+ {\sc FedDecorr}}} & \cellcolor{hl}{40.29} & \cellcolor{hl}{43.86} &\cellcolor{hl}{}&\cellcolor{hl}{}& \cellcolor{hl}{50.01} \\
\midrule
\multirow{2}{*}{$20$} & \small{FedAvg} & 31.21 & 35.30 &&& 43.64 \\
& \cellcolor{hl}{\small{+ {\sc FedDecorr}}} & \cellcolor{hl}{39.41} & \cellcolor{hl}{41.27} &\cellcolor{hl}{}&\cellcolor{hl}{}& \cellcolor{hl}{46.17} \\
\midrule
\multirow{2}{*}{$30$} & \small{FedAvg} & 26.20 & 30.88 &&& 37.22 \\
& \cellcolor{hl}{\small{+ {\sc FedDecorr}}} & \cellcolor{hl}{36.50} & \cellcolor{hl}{39.02} &\cellcolor{hl}{}&\cellcolor{hl}{}& \cellcolor{hl}{44.38} \\
\midrule
\multirow{2}{*}{$50$} & \small{FedAvg} & 25.70 & 28.88 &&& 34.89 \\
& \cellcolor{hl}{\small{+ {\sc FedDecorr}}} & \cellcolor{hl}{34.50} & \cellcolor{hl}{36.67} &\cellcolor{hl}{}&\cellcolor{hl}{}& \cellcolor{hl}{42.34} \\
\midrule
\multirow{2}{*}{$100$} & \small{FedAvg} & 21.53 & 24.69 &&& 30.21 \\
& \cellcolor{hl}{\small{+ {\sc FedDecorr}}} & \cellcolor{hl}{30.55} & \cellcolor{hl}{33.85} &\cellcolor{hl}{}&\cellcolor{hl}{}& \cellcolor{hl}{38.65} \\

\bottomrule
\end{tabular}
\caption{\textbf{Ablation study on the number of clients.} Based on TinyImageNet, we run experiments with different number of clients and different amounts of data heterogeneity.}
\vspace{-0.3cm}
\label{ablation_client}
}
}
\end{wraptable}

To further demonstrate the advantages of  {\sc FedDecorr}, we apply it on FedAvg and plot how the test accuracy of the global model evolves throughout the federated learning in Fig.~\ref{main_res_figure}.
In this figure, if we set a certain value of the testing accuracy as a threshold, we see that adding  {\sc FedDecorr}  significantly reduces the number of communication rounds needed to achieve the given threshold. This further shows that  {\sc FedDecorr} not only improves the final performance, but also greatly boosts the communication efficiency in federated learning.

\vspace{-0.1cm}
\subsection{Ablation Study on the Number of Clients}
\vspace{-0.1cm}
\label{text: ablation_num_clients}
Next, we study whether the improvements brought by  {\sc FedDecorr} are preserved as number of clients increases. We partition the TinyImageNet dataset into $10$, $20$, $30$, $50$, and $100$ clients according to different $\alpha$'s, and then run FedAvg with and without {\sc FedDecorr}.
For the experiments with $10$, $20$ and $30$ clients, we run $50$ communication rounds.
For the experiments with $50$ and $100$ clients, we randomly select $20\%$ of the total clients to participate the federated learning in each round and run $100$ communication rounds.
Results are shown in Tab.~\ref{ablation_client}. From this table, we see that the performance improvements resulting from {\sc FedDecorr} increase from around $\mathbf{3\%\sim 5\%}$ to around $\mathbf{7\%\sim 10\%}$ with the growth in the number of clients. Therefore, interestingly, we show through experiments that the improvements brought by {\sc FedDecorr} can be even more pronounced under the more challenging settings with more clients.
Moreover, our experimental results under random client participation show that the improvements from {\sc FedDecorr} are robust to such uncertainties.
These experiments demonstrate the potential of  {\sc FedDecorr} to be applied to real world federated learning settings with massive numbers of clients and random client participation.

\begin{figure*}
    \centering
    \includegraphics[width=0.99\textwidth]{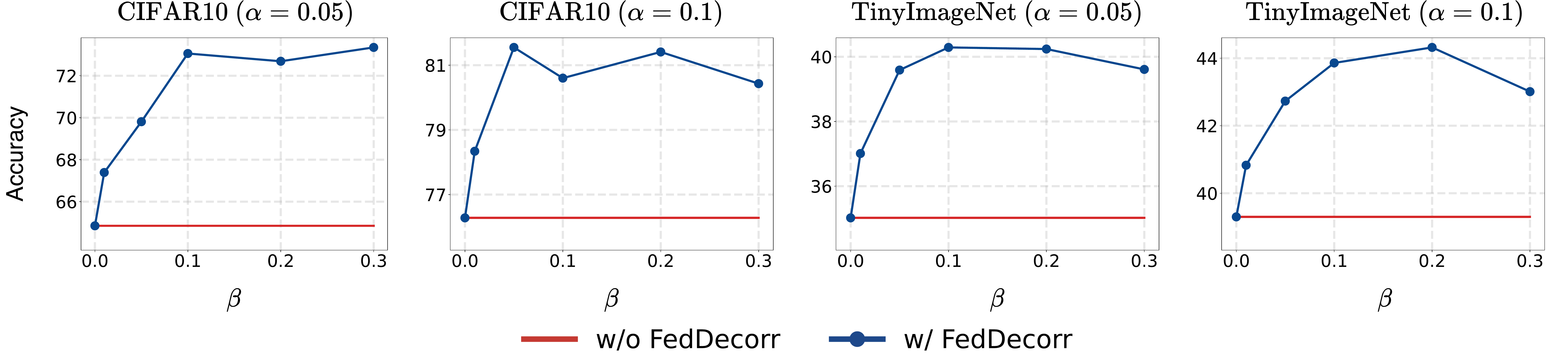}
    \vspace{-3mm}
    \caption{\textbf{Ablation study on $\beta$.}
    We apply {\sc FedDecorr} with different choices of $\beta$ on FedAvg.
    }
    \vspace{-0.5cm}
    \label{beta_ablation}
\end{figure*}

\vspace{-0.1cm}
\subsection{Ablation Study on the Regularization Coefficient $\beta$}
\vspace{-0.1cm}
\label{text: ablation_beta}
Next, we study {\sc FedDecorr}'s robustness to the $\beta$ in Eqn.~\eqref{overall_obj} by varying  it in the set $\{0.01, 0.05, 0.1, 0.2, 0.3\}$. We partition the CIFAR10 and TinyImageNet datasets into $10$ clients with $\alpha$ equals to $0.05$ and $0.1$ to simulate the heterogeneous setting.  Results are shown in Fig.~\ref{beta_ablation}.
We observe that, in general, when $\beta$ increases, the performance of {\sc FedDecorr} first increases, then plateaus, and finally decreases slightly.
These results show that {\sc FedDecorr} is relatively insensitive to the choice of $\beta$, which implies {\sc FedDecorr} is an easy-to-tune federated learning method.
In addition, among all experimental setups, setting $\beta$ to be $0.1$ consistently produces (almost) the best results. Therefore, we recommend $\beta=0.1$ when having no prior information about the dataset.

\begin{wraptable}{r}{7.3cm}
\definecolor{hl}{gray}{0.9}
\setlength{\tabcolsep}{0.6mm}{
\renewcommand{\arraystretch}{1.0}{
\centering
\vspace{-0.4cm}
\begin{tabular}{ccccccc}
\toprule
\multirow{2.5}{*}{\small{ $E$ }} & \multirow{2.5}{*}{Method} & \multicolumn{2}{c}{CIFAR100} && \multicolumn{2}{c}{TinyImageNet} \\
\cmidrule{3-4}\cmidrule{6-7}
&& $\alpha=0.05$ & $0.1$ &&  $0.05$ & $0.1$ \\
\midrule
\multirow{2.5}{*}{$1$} & \small{FedAvg} & 50.67 & 55.98 && 32.31 & 34.88 \\
& \cellcolor{hl}{\small{+ {\sc FedDecorr}}} & \cellcolor{hl}{53.18} & \cellcolor{hl}{57.02} &\cellcolor{hl}{}& \cellcolor{hl}{36.49} & \cellcolor{hl}{38.99} \\
\midrule
\multirow{2.5}{*}{$5$} & \small{FedAvg} & 59.57 & 65.02 && 36.02 & 40.75 \\
& \cellcolor{hl}{\small{+ {\sc FedDecorr}}} & \cellcolor{hl}{61.42} & \cellcolor{hl}{65.98} &\cellcolor{hl}{}& \cellcolor{hl}{41.68} & \cellcolor{hl}{44.77} \\
\midrule
\multirow{2.5}{*}{$10$} & \small{FedAvg} & 59.87 & 66.46 && 35.02 & 39.30 \\
& \cellcolor{hl}{\small{+ {\sc FedDecorr}}} & \cellcolor{hl}{61.53} & \cellcolor{hl}{67.12} &\cellcolor{hl}{}& \cellcolor{hl}{40.29} & \cellcolor{hl}{43.86} \\
\midrule
\multirow{2.5}{*}{$20$} & \small{FedAvg} & 58.50 & 66.37 && 31.23 & 37.23 \\
& \cellcolor{hl}{\small{+ {\sc FedDecorr}}} & \cellcolor{hl}{60.65} & \cellcolor{hl}{66.86} &\cellcolor{hl}{}& \cellcolor{hl}{35.44} & \cellcolor{hl}{42.04} \\
\bottomrule
\end{tabular}
\vspace{-0.3cm}
\caption{\textbf{Ablation study on local epochs.} Experiments with different number of local epochs~$E$.}
\vspace{-0.5cm}
\label{ablation_local_epochs}
}
}
\end{wraptable}

\vspace{-0.1cm}
\subsection{Ablation Study on the Number of Local Epochs}
\vspace{-0.1cm}
\label{text: ablation_local_epochs}
Lastly, we ablate on the number of local epochs per communication round. We set the number of local epochs $E$ to be in the set $\{1, 5, 10, 20\}$. We run experiments with and without {\sc FedDecorr}, and we use the  CIFAR100 and TinyImageNet datasets  with $\alpha$ being $0.05$ and $0.1$ for this ablation study. Results are shown in Tab.~\ref{ablation_local_epochs}, in which one observes that with increasing $E$, {\sc FedAvg} performance first increases and then decreases. This is because when $E$ is too small, the local training cannot converge properly in each communication round. On the other hand, when $E$ is too large, the model parameters of local clients might be driven to be too far from the global optimum. Nevertheless, {\sc FedDecorr} consistently improves over the baselines across different choices of local epochs $E$.

\vspace{-0.1cm}
\subsection{Additional Empirical Analyses}
\vspace{-0.1cm}
We present more empirical analyses in Appendix~\ref{app: additional_expt}.
These include comparing {\sc FedDecorr} with other baselines (Appendix~\ref{app: compare_other_baseline}) and other decorrelation methods (Appendix~\ref{app: compare_decorr}), experiments on other model architectures (Appendix~\ref{app: compare_on_other_arch}) and another type of data heterogeneity (Appendix~\ref{app: other_type_heterogeneity}), and discussing the computational advantage of {\sc FedDecorr} (Appendix~\ref{app: comp_cost}).

\vspace{-0.1cm}
\section{Conclusion}
\vspace{-0.1cm}
In this work, we study representations of trained models under federated learning in which the data held by clients are heterogeneous. Through extensive empirical observations and theoretical analyses, we show that stronger data heterogeneity results in more severe dimensional collapse for both global and local representations. Motivated by this, we propose {\sc FedDecorr}, a novel method to mitigate dimensional collapse during local training, thus improving federated learning under the heterogeneous data setting. Extensive experiments on benchmark datasets show that {\sc FedDecorr} yields consistent improvements over existing baseline methods.

\vspace{-0.1cm}
\subsection*{Acknowledgements}
\vspace{-0.1cm}
The authors would like to thank anonymous reviewers for the constructive feedback. Yujun Shi and Vincent Tan are supported by Singapore Ministry of Education Tier 1 grants (Grant Number: A-0009042-01-00, A-8000189-01-00,  A-8000980-00-00) and a Singapore National Research Foundation (NRF) Fellowship (Grant Number: A-0005077-01-00).
Jian Liang is supported by National Natural Science Foundation of China (Grant No. 62276256) and Beijing Nova Program under Grant Z211100002121108.

%One limitation of our work is that we only empirically show {\sc FedDecorr} is effective in improving the performance of heterogeneous federated learning. Establishing theoretical guarantees is left as future work.

\section*{Reproducibility Statement}
All source code has been released at \href{https://github.com/bytedance/FedDecorr}{https://github.com/bytedance/FedDecorr}. Pseudo-code  of {\sc FedDecorr} is provided in Appendix~\ref{app:pseudo}. We introduced all the implementation details of baselines and our method in Sec.~\ref{setup}. In addition, the proofs of Theorem~\ref{multilayer} and Proposition~\ref{prop_decorr} are provided in Appendix~\ref{app:proof_thm} and Appendix~\ref{app:proof_prop}, respectively. All assumptions are stated and discussed in the proof.

\bibliography{iclr2023_conference}
\bibliographystyle{iclr2023_conference}

\clearpage

\appendix
\section{Proof of Theorem 1 in Main Paper} \label{app:proof_thm}
\subsection{Notations Revisited}
Here, for the reader's convenience, we summarize the notations used in both the main text and this appendix.

\begin{center}
\begin{tabular}{ |c|c| }
\hline
Notation & Explanation \\
\hline
$N$ & Number of training data points. \\
$C$ & Total number of classes. \\
$X$ & The collection of the $N$ training samples, $X\in \mathbb{R}^{d_{\mathrm{in}}\times N}$. \\
$\by$ & The collection of one hot labels of the $N$ training samples, $\by \in \bbR^{C \times N}$. \\
$\gamma$ & The collection of model output softmax vectors given all $N$ input data, $\gamma\in \bbR^{C\times N}$ \\
$W_{i}(t)$ & The $i$-th layer weight matrix at the $t$-th optimization step. \\ 
$\Vr(t)$ & The product of the weight matrices of the first $L$ layers: $\Vr(t)= W_{L}(t)\ldots W_{1}(t)$. \\ 
%$\eta$ & Learning rate. \\
$\sigma_{l,k}$ & The $k$-th singular value of $W_{l}$. \\
$\bu_{l,k}$ & The $k$-th left singular vector of $W_{l}$. \\
$\bv_{l,k}$ & The $k$-th right singular vector of $W_{l}$. \\
$\sigma_{k}$ & The $k$-th singular value of $\Vr$. \\
$\bu_{k}$ & The $k$-th left singular vector of $\Vr$. \\
$\bv_{k}$ & The $k$-th right singular vector of $\Vr$. \\
$N_{c}$  & Number of samples of class $c$. \\
$\mu_{c}$ & The proportion of class $c$ samples w.r.t.\ the whole training samples: $\mu_{c}=\frac{N_{c}}{N}$ \\
$\be_{c}$ & The $C$-dimensional one-hot vector where only the $c$-th entry is $1$. \\
$\bar{X}_{c}$ & Mean vector of the training examples in class $c$: $\bar{X}_{c} = \frac{1}{N_{c}}\sum_{i=1}^{N}X_{i}\mathbbm{1}\{\mathbf{y}_{i} = \be_{c}\}$ \\
$\bar{\gamma}_{c}$ & Mean output softmax vector given samples in class $c$: $\bar{\gamma}_{c} = \frac{1}{N_{c}}\sum_{i=1}^{N}\gamma_{i}\mathbbm{1}\{\mathbf{y}_{i} = \be_{c}\}$ \\
\hline
\end{tabular}
\end{center}

\subsection{Two lemmas}
Here, we elaborate two useful lemmas from \cite{arora2019implicit,arora2018optimization}.

The first lemma is adopted from \cite{arora2019implicit}:
\begin{lemma}
\label{lemma_sigma_dyn}
Assuming the weight matrix $W$ evolves under gradient descent dynamics with infinitesimally small learning rate, the $k$-th singular value of this matrix (denoted as $\sigma_{k}$) evolves as
\begin{equation}
    \dot{\sigma}_{k}(t) = (\bu_{k}(t))^{\top}\dot{W}(t)\bv_{k}(t), \label{eqn:flow}
\end{equation}
where $\bu_{k}(t)$ and $\bv_{k}(t)$ are the $k$-th left and right singular vectors of $W(t)$, respectively.
\end{lemma}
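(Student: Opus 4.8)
The plan is to start from the time-dependent SVD of $W(t)$ and express each singular value as a bilinear form in its associated singular vectors, then differentiate. Writing the SVD as $W(t) = \sum_{j}\sigma_{j}(t)\,\bu_{j}(t)\bv_{j}(t)^{\top}$ with $\{\bu_{j}(t)\}$ and $\{\bv_{j}(t)\}$ orthonormal families, orthonormality immediately yields the scalar identity $\sigma_{k}(t) = \bu_{k}(t)^{\top}W(t)\bv_{k}(t)$. Differentiating this identity in $t$ and applying the product rule produces three terms:
\[
\dot{\sigma}_{k}(t) = \dot{\bu}_{k}(t)^{\top}W(t)\bv_{k}(t) + \bu_{k}(t)^{\top}\dot{W}(t)\bv_{k}(t) + \bu_{k}(t)^{\top}W(t)\dot{\bv}_{k}(t).
\]
The middle term is exactly the claimed right-hand side, so the entire content of the lemma reduces to showing that the first and third terms vanish.

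For the first term I would use the right singular relation $W(t)\bv_{k}(t) = \sigma_{k}(t)\bu_{k}(t)$ to rewrite it as $\sigma_{k}(t)\,\dot{\bu}_{k}(t)^{\top}\bu_{k}(t)$. Differentiating the normalization $\bu_{k}(t)^{\top}\bu_{k}(t)=1$ gives $\dot{\bu}_{k}(t)^{\top}\bu_{k}(t)=0$, which kills the term. Symmetrically, for the third term I would use the left singular relation $\bu_{k}(t)^{\top}W(t) = \sigma_{k}(t)\bv_{k}(t)^{\top}$ to rewrite it as $\sigma_{k}(t)\,\bv_{k}(t)^{\top}\dot{\bv}_{k}(t)$, and then differentiate $\bv_{k}(t)^{\top}\bv_{k}(t)=1$ to conclude $\bv_{k}(t)^{\top}\dot{\bv}_{k}(t)=0$. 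Hence only the middle term survives, establishing Eqn.~\eqref{eqn:flow}.

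The main subtlety---and essentially the only place requiring care---is the differentiability of the singular vectors $\bu_{k}(t)$ and $\bv_{k}(t)$ along the flow. Standard perturbation theory guarantees that the SVD can be chosen to depend smoothly on $t$ only when the relevant singular value is \emph{simple}; at a crossing of singular values the associated singular subspace may rotate non-smoothly and the individual singular vectors need not be differentiable. I would therefore adopt, following \cite{arora2019implicit,arora2018optimization}, the standing assumption that we restrict to times $t$ at which $\sigma_{k}(t)$ is a simple singular value, so that $\bu_{k}(t)$ and $\bv_{k}(t)$ are locally well-defined and $C^{1}$. Under gradient flow $\dot{W}(t) = -\partial\ell/\partial W$ is continuous along the smooth trajectory, so all derivatives above exist and the manipulation is legitimate. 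Notably, no structural property of the dynamics is used: the identity holds for \emph{any} smooth matrix trajectory, and the specific gradient-flow form of $\dot{W}(t)$ (with the cross-entropy gradient) enters only in the later computation that feeds this lemma into Theorem~\ref{multilayer}.
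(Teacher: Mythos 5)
Your proposal is correct and follows essentially the same route as the paper's proof: both differentiate the SVD and kill the cross terms using the singular-vector relations together with $\dot{\bu}_{k}^{\top}\bu_{k} = \bv_{k}^{\top}\dot{\bv}_{k} = 0$ from the unit-norm constraints (the paper differentiates $W = USV^{\top}$ and extracts the $(k,k)$ entry, whereas you scalarize first via $\sigma_{k} = \bu_{k}^{\top}W\bv_{k}$, but these are the same computation). Your added caveat about restricting to times where $\sigma_{k}(t)$ is simple, so that the singular vectors are locally differentiable, is a legitimate technical point that the paper glosses over.
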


\begin{proof}
By performing an SVD on $W(t)$, we have $W(t)=U(t)S(t)V(t)^{\top}$.
Therefore, by the chain  rule in differention, we have:
\begin{equation}
\begin{aligned}
    \dot{W}(t) = \dot{U}(t)S(t)V(t)^{\top} + U(t)\dot{S}(t)V(t)^{\top} + U(t)S(t)\dot{V}(t)^{\top}.
\end{aligned}
\end{equation}

Next, for both sides of the above equation, we left multiply $U(t)^{\top}$ and right multiply $V(t)$:
\begin{equation}
\begin{aligned}
    U(t)^{\top}\dot{W}(t)V(t) = U(t)^{\top}\dot{U}(t)S(t) + \dot{S}(t) + S(t)(\dot{V}(t))^{\top}V(t).
\end{aligned}
\end{equation}
Since $S(t)$ is a diagonal matrix, we consider the $k$-th diagonal entry of $S(t)$, namely $\sigma_{k}(t)$:
\begin{equation}
\label{sigma_evolve_origin}
\begin{aligned}
(\bu_{k}(t))^{\top}\dot{W}(t)\bv_{k}(t) =   (\bu_{k}(t))^{\top}\dot{\bu}_{k}(t)\sigma_{k}(t) + \dot{\sigma}_{k}(t) +\sigma_{k}(t)(\bv_{k}(t))^{\top}\dot{\bv}_{k}(t).
\end{aligned}
\end{equation}

Since $\bu_{k}(t)$ and $\bv_{k}(t)$ are unit vectors and are evolving in time with infinitesimal rate, we have  $(\bu_{k}(t))^{\top}\dot{\bu}_{k}(t) = 0$ and $(\bv_{k}(t))^{\top}\dot{\bv}_{k}(t) = 0$. Next, Eqn.~\eqref{sigma_evolve_origin} can be simplified as
\begin{equation}
    \dot{\sigma}_{k}(t) = (\bu_{k}(t))^{\top}\dot{W}(t)\bv_{k}(t).
\end{equation}
The proof is thus complete.
\end{proof}

The second lemma is adopted from \cite{arora2018optimization}.
\begin{lemma}
\label{lemma_w_e_dyn}
Given $L$ consecutive linear layers in a neural network characterized by weight matrices $W_{1},W_{2},\ldots, W_{L}$. We denote $\Vr = W_{L}W_{L-1}\ldots W_{1}$.
We further denote $W_{j}(t)$ as weight matrix $W_{j}$ after the $t$-th gradient descent optimization step. Correspondingly, the initialization of $W_{j}$ is $W_{j}(0)$.
Assuming we have $W_{j}(0)(W_{j}(0))^{\top} = (W_{j+1}(0))^{\top}W_{j+1}(0)$ for any $j\in [ L-1]$ at initialization. Then, under the gradient descent dynamics, $\Vr(t)$ satisfies
\begin{equation}
    \dot{\Vr}(t) = - \sum_{j=1}^{L}\left[\Vr(t)\Vr(t)^{\top} \right]^{\frac{L-j}{L}}\frac{\partial \ell(\Vr(t))}{\partial \Vr}\left[\Vr(t)^{\top}\Vr(t)\right]^{\frac{j-1}{L}},
\end{equation}
where $[\cdot]^{\frac{L-j}{L}}$ and $[\cdot]^{\frac{j-1}{L}}$ are fractional power operators defined over positive semi-definite matrices.
\end{lemma}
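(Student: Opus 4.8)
The plan is to establish the result in two stages: first show that the \emph{balancedness} condition assumed only at initialization is in fact preserved along the entire gradient flow trajectory, and then use this invariant to collapse the per-layer dynamics into the stated closed form for $\dot{\Vr}(t)$. For the first stage I would compute the gradient flow of each individual layer. Writing $\Vr_{>j}(t)=W_{L}(t)\cdots W_{j+1}(t)$ and $\Vr_{<j}(t)=W_{j-1}(t)\cdots W_{1}(t)$, the chain rule gives $\frac{\partial \ell}{\partial W_{j}}=\Vr_{>j}^{\top}\frac{\partial \ell}{\partial \Vr}\Vr_{<j}^{\top}$, so that $\dot{W}_{j}=-\Vr_{>j}^{\top}\frac{\partial\ell}{\partial\Vr}\Vr_{<j}^{\top}$. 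Exploiting $\Vr_{>j}=\Vr_{>j+1}W_{j+1}$ and $\Vr_{<j+1}=W_{j}\Vr_{<j}$, a direct substitution yields the key identity $\dot{W}_{j}(t)W_{j}(t)^{\top}=W_{j+1}(t)^{\top}\dot{W}_{j+1}(t)$. Transposing and adding then gives $\frac{d}{dt}\big(W_{j}W_{j}^{\top}\big)=\frac{d}{dt}\big(W_{j+1}^{\top}W_{j+1}\big)$, so the difference $W_{j+1}(t)^{\top}W_{j+1}(t)-W_{j}(t)W_{j}(t)^{\top}$ is constant in $t$; since it vanishes at $t=0$ by hypothesis, balancedness holds for all $t$.

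For the second stage I would differentiate the product directly. By the product rule, $\dot{\Vr}=\sum_{j=1}^{L}\Vr_{>j}\dot{W}_{j}\Vr_{<j}=-\sum_{j=1}^{L}\Vr_{>j}\Vr_{>j}^{\top}\frac{\partial\ell}{\partial\Vr}\Vr_{<j}^{\top}\Vr_{<j}$, so it remains to identify $\Vr_{>j}\Vr_{>j}^{\top}=[\Vr\Vr^{\top}]^{\frac{L-j}{L}}$ and $\Vr_{<j}^{\top}\Vr_{<j}=[\Vr^{\top}\Vr]^{\frac{j-1}{L}}$. This is precisely where balancedness enters: equating the spectral decompositions in $W_{j}W_{j}^{\top}=W_{j+1}^{\top}W_{j+1}$ forces every layer to share a common set of nonzero singular values, with the left singular vectors of $W_{j}$ matching the right singular vectors of $W_{j+1}$. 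The intermediate orthogonal factors in the product therefore telescope, giving $\Vr=U_{L}\Sigma^{L}V_{1}^{\top}$ and, more generally, $\Vr_{>j}=U_{L}\Sigma^{L-j}V_{j+1}^{\top}$ and $\Vr_{<j}=U_{j-1}\Sigma^{j-1}V_{1}^{\top}$. Comparing spectral decompositions then delivers both fractional-power identities, and substituting them into the product-rule expansion produces exactly the claimed formula for $\dot{\Vr}(t)$.

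The hard part will be the alignment/telescoping argument in the second stage. Making the claim that adjacent layers share singular vectors rigorous requires care when singular values are repeated, since the singular vectors are then determined only up to rotation within each eigenspace; one must argue that a consistent choice can be propagated across all layers. Moreover, the $W_{j}$ are rectangular with nontrivial kernels, so the equality $\Vr_{>j}\Vr_{>j}^{\top}=[\Vr\Vr^{\top}]^{\frac{L-j}{L}}$ must be verified on the common range while confirming that the zero eigenvalues contribute consistently on both sides under the fractional-power operators $[\cdot]^{\frac{L-j}{L}}$ and $[\cdot]^{\frac{j-1}{L}}$, which are defined through the spectral decomposition of the positive semi-definite factors. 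By contrast, the chain-rule computation and the product-rule expansion are routine.
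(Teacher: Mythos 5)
Your proposal follows essentially the same route as the paper's proof: the identical conservation-of-balancedness argument via $\dot{W}_{j}(t)W_{j}(t)^{\top}=W_{j+1}(t)^{\top}\dot{W}_{j+1}(t)$, followed by the same product-rule expansion and SVD-alignment/telescoping identification of $\Vr_{>j}\Vr_{>j}^{\top}$ and $\Vr_{<j}^{\top}\Vr_{<j}$ with fractional powers of $\Vr\Vr^{\top}$ and $\Vr^{\top}\Vr$. The repeated-singular-value subtlety you flag as the hard part is resolved in the paper exactly along the lines you anticipate: the alignment is stated only up to block-diagonal orthogonal rotations, $U_{j}=V_{j+1}\,\mathrm{diag}(O_{j,1},\ldots,O_{j,m})$, and since each block of $S_{j}S_{j}^{\top}$ is a scalar multiple of the identity these rotations commute with it, so the telescoping goes through without ever choosing consistent singular vectors.
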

\begin{proof}
Here, we first define some additional notation. Given any square matrices (or possibly scalar) $A_{1},A_{2},\ldots, A_{m}$, we denote $\mathrm{diag}(A_{1},A_{2},\ldots,A_{m})$ to be the block diagonal matrix
$$
\mathrm{diag}(A_{1},A_{2},\ldots,A_{m}) = \begin{bmatrix}
A_{1} & 0 & 0 & 0 \\
0 & A_{2} & 0 & 0 \\
0 & 0 & \ddots & 0 \\
0 & 0 & 0 & A_{m} \\
\end{bmatrix}.
$$

Here, we first consider dynamics of an arbitrary $W_{j}$ where $j \in [ L-1]$. By the chain rule, we  have
\begin{equation}
\label{dynamic_w_j}
\begin{aligned}
    \dot{W}_{j}(t) &= -\frac{\partial \ell(W_{1}(t), \ldots, W_{L+1}(t))}{\partial W_{j}(t)}\\
    &= -(W_{j+1}(t)^{\top}\ldots W_{L}(t)^{\top})\frac{\partial \ell(\Vr(t))}{\partial \Vr}(W_{1}(t)^{\top}\ldots W_{j-1}(t)^{\top}).
\end{aligned}
\end{equation}
Given Eqn.~\eqref{dynamic_w_j}, we right multiply $\dot{W}_{j}(t)$ by $(W_{j}(t))^{\top}$ and we left multiply $\dot{W}_{j+1}(t)$ by $(W_{j+1}(t))^{\top}$, which yields
\begin{equation}
\label{derivative_1}
\dot{W}_{j}(t)(W_{j}(t))^{\top} = (W_{j+1}(t))^{\top}\dot{W}_{j+1}(t).
\end{equation}

Applying the same trick on $W_{j}(t)^{\top}$ and $W_{j+1}(t)^{\top}$ yields
\begin{equation}
\label{derivative_2}
W_{j}(t)(\dot{W}_{j}(t))^{\top} = (\dot{W}_{j+1}(t))^{\top}W_{j+1}(t).
\end{equation}
Adding Eqns.~\eqref{derivative_1} and \eqref{derivative_2} on both sides yields
\begin{equation}
\label{der1_p_der2}
\begin{aligned}
\dot{W}_{j}(t)(W_{j}(t))^{\top} + W_{j}(t)(\dot{W}_{j}(t))^{\top} =  W_{j+1}(t)^{\top}\dot{W}_{j+1}(t) + (\dot{W}_{j+1}(t))^{\top}W_{j+1}(t).
\end{aligned}
\end{equation}
Next, by the chain rule for differentiation, Eqn.~\eqref{der1_p_der2} directly implies that
\begin{equation}
    \frac{\mathrm{d}(W_{j}(t)W_{j}(t)^{\top})}{\mathrm{d}t} = \frac{\mathrm{d}(W_{j+1}(t)^{\top}W_{j+1}(t))}{\mathrm{d} t}.
\end{equation}
Since we have assumed that $W_{j}(0)W_{j}(0)^{\top} = W_{j+1}(0)^{\top}W_{j+1}(0)$, we can conclude that
\begin{equation}
\label{t_consist}
W_{j}(t)W_{j}(t)^{\top} = W_{j+1}(t)^{\top}W_{j+1}(t).
\end{equation}

Next, we apply an SVD on $W_{j}(t)$ and $W_{j+1}(t)$ in Eqn.~\eqref{t_consist}. This  yields
\begin{equation}
\label{eq_us_vs}
U_{j}(t)S_{j}(t)S^{\top}_{j}(t)U^{\top}_{j}(t) = V_{j+1}(t)S^{\top}_{j+1}(t)S_{j+1}(t)V^{\top}_{j+1}(t).
\end{equation}

Based on Eqn.~\eqref{eq_us_vs} and given the uniqueness property of SVD, we know:
\begin{equation}
\label{rho_eq}
S_{j}(t)S_{j}(t)^{\top} = S_{j+1}^{\top}(t)S_{j+1}(t) = \mathrm{diag}(\rho_{1}I_{d_{1}}, \rho_{2}I_{d_{2}}, \ldots, \rho_{m}I_{d_{m}}),
\end{equation}
where $\sqrt{\rho_{1}},\ldots, \sqrt{\rho_{m}}$ represent the $m$ distinct singular values satisfying $\rho_{1}>\rho_{2}>\ldots>\rho_{m}\ge0$, and $I_{d_{r}}$ for any $r \in [ m]$ are identity matrix of size $d_{r}\times d_{r}$.
Since Eqn.~\eqref{rho_eq} holds for any $j$, we know by induction that the set of values of $\rho$'s is the same across all layers $j\in[L]$.
In addition, there exist orthogonal matrices $O_{j,r}\in \bbR^{d_{r}\times d_{r}}$ for any $r \in [ m]$ such that
\begin{equation}
\label{semi_align}
U_{j}(t) = V_{j+1}(t)\mathrm{diag}(O_{j,1},O_{j,2},\ldots,O_{j,m}).
\end{equation}

Given Eqns.~\eqref{semi_align}, next, we  study $W_{j+1}(t)W_{j}(t)W^{\top}_{j}(t)W^{\top}_{j+1}(t)$ for any $j\in [N-1]$:
\begin{equation}
\label{k_2_1}
\begin{aligned}
&W_{j+1}(t)W_{j}(t)W^{\top}_{j}(t)W^{\top}_{j+1}(t) \\
&= U_{j+1}S_{j+1}V_{j+1}^{\top}U_{j}S_{j}S^{\top}_{j}U_{j}^{\top}V_{j+1}S_{j+1}^{\top}U_{j+1}^{\top} \\
&= U_{j+1}S_{j+1}\mathrm{diag}(O_{j,1},O_{j,2},\ldots,O_{j,m})S_{j}S^{\top}_{j}\mathrm{diag}(O^{\top}_{j,1},O^{\top}_{j,2},\ldots,O^{\top}_{j,m})S_{j+1}^{\top}U_{j+1}^{\top} \\
&\hspace{4.2in} \text{(plugging-in \eqref{semi_align})} \\
&= U_{j+1}S_{j+1}S_{j}S_{j}^{\top}S_{j+1}^{\top}U_{j+1}^{\top} \;\;\;\;\;\;\;\;\;\;\;\text{($S_{j}$ commutes with $\mathrm{diag}(O_{j,1},O_{j,2},\ldots,O_{j,m})$)} \\
&= U_{j+1}\mathrm{diag}(\rho_{1}^{2}I_{d_{1}}, \rho_{2}^{2}I_{d_{2}}, \ldots, \rho_{m}^{2}I_{d_{m}})U_{j+1}^{\top}.
\end{aligned}
\end{equation}
Similarly, it holds that 
\begin{equation}
\label{k_2_2}
    W^{\top}_{j}(t)W^{\top}_{j+1}(t)W_{j+1}(t)W_{j}(t) = V_{j}\mathrm{diag}(\rho_{1}^{2}I_{d_{1}}, \rho_{2}^{2}I_{d_{2}}, \ldots, \rho_{m}^{2}I_{d_{m}})V_{j}^{\top}.
\end{equation}
Next, by induction and Eqns.~\eqref{k_2_1}, 
\begin{align}
&W_{L}(t)\ldots W_{j}(t)W_{j}(t)^{\top}\ldots W_{L}(t)^{\top} \nonumber\\
&\qquad
= U_{L}\mathrm{diag}(\rho_{1}^{L-j+1}I_{d_{1}}, \rho_{2}^{L-j+1}I_{d_{2}}, \ldots, \rho_{m}^{L-j+1}I_{d_{m}})U_{L}^{\top},\label{k_n_1}
\end{align}
by induction and Eqns.~\eqref{k_2_2},  it holds that 
\begin{equation}
\label{k_n_2}
W^{\top}_{1}(t)\ldots W^{\top}_{j}(t)W_{j}(t)\ldots W_{1}(t) = V_{1}\mathrm{diag}(\rho_{1}^{j}I_{d_{1}}, \rho_{2}^{j}I_{d_{2}}, \ldots, \rho_{m}^{j}I_{d_{m}})V^{\top}_{1}.
\end{equation}

From Eqns.~\eqref{k_n_1}, we know that for any $j\in[L-1]$,
\begin{equation}
\label{substitute_1}
\begin{aligned}
    \Vr(t)\Vr(t)^{\top} &= W_{L}(t)\ldots W_{1}(t)W_{1}(t)^{\top}\ldots W_{L}(t)^{\top} \\
    &= U_{L}\mathrm{diag}(\rho_{1}^{L}I_{d_{1}}, \rho_{2}^{L}I_{d_{2}}, \ldots, \rho_{m}^{L}I_{d_{m}})U^{\top}_{L} \\
    &= \left[U_{L}\mathrm{diag}(\rho_{1}^{L-j}I_{d_{1}}, \rho_{2}^{L-j}I_{d_{2}}, \ldots, \rho_{m}^{L-j}I_{d_{m}})U^{\top}_{L}\right]^{\frac{L}{L-j}} \\
    &= \left[W_{L}(t)\ldots W_{j+1}(t)W_{j+1}(t)^{\top}\ldots W_{L}(t)^{\top}\right]^{\frac{L}{L-j}}.
\end{aligned}
\end{equation}

Similarly, from Eqn.~\eqref{k_n_2}, we know that for any $2\leq j\leq L-1$,
\begin{equation}
\label{substitute_2}
\begin{aligned}
    \Vr(t)^{\top}\Vr(t) &= W_{1}(t)^{\top}\ldots W_{L}(t)^{\top}W_{L}(t)\ldots W_{1}(t) \\
    &= V_{1}\mathrm{diag}(\rho_{1}^{L}I_{d_{1}}, \rho_{2}^{L}I_{d_{2}}, \ldots, \rho_{m}^{L}I_{d_{m}})V^{\top}_{1} \\
    &= \left[V_{1}\mathrm{diag}(\rho_{1}^{j-1}I_{d_{1}}, \rho_{2}^{j-1}I_{d_{2}}, \ldots, \rho_{m}^{j-1}I_{d_{m}})V^{\top}_{1}\right]^{\frac{L}{j-1}} \\
    &= \left[W_{1}^{\top}\ldots W_{j-1}^{\top}W_{j-1}(t)\ldots W_{1}(t)\right]^{\frac{L}{j-1}}.
\end{aligned}
\end{equation}

With everything derived above, we now study the dynamics of $\Vr(t)$ as follows
\begin{equation}
\begin{aligned}
    \dot{\Vr}(t) &= \sum_{j=1}^{L}\left[W_{L}(t)\ldots W_{j+1}(t)\right](\dot{W}_{j}(t))\left[W_{j-1}(t)\ldots W_{1}(t)\right] \;\;\;\text{(differential chain rule)}\\
    &= -\sum_{j=1}^{L} \left[W_{L}(t)\ldots W_{j+1}(t)W_{j+1}(t)^{\top}\ldots W_{L}(t)^{\top}\right] \\
    &\quad\quad \times\frac{\partial \ell(\Vr(t))}{\partial \Vr} \left[W^{\top}_{1}(t)\ldots W^{\top}_{j-1}(t) W_{j-1}(t)\ldots W_{1}(t)\right] \;\;\;\text{(plugging-in~\eqref{dynamic_w_j})} \\
    &= -\sum_{j=1}^{L} \left[\Vr(t)\Vr(t)^{\top}  \right]^{\frac{L-j}{L}}\frac{\partial \ell(\Vr(t))}{\partial \Vr} \left[\Vr(t)^{\top}\Vr(t)  \right]^{\frac{j-1}{L}} \;\;\;\;\;\;\;\text{(plugging-in~\eqref{substitute_1} and ~\eqref{substitute_2})}.
\end{aligned}
\end{equation}
This completes the proof.
\end{proof}
\begin{figure}
    \centering
    \vspace{-1cm}
    \includegraphics[width=\textwidth]{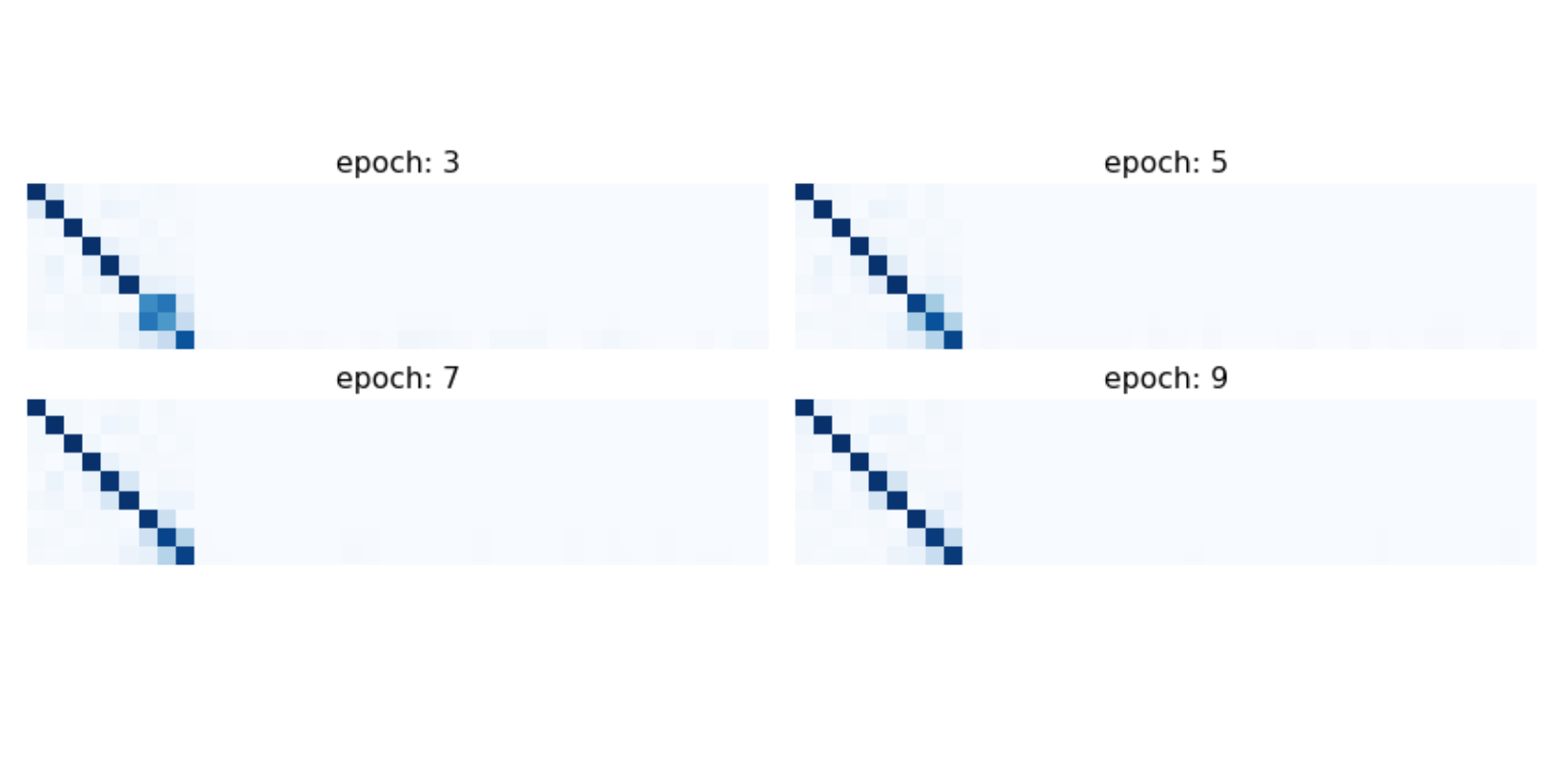}
    \vspace{-2cm}
    \caption{\textbf{Alignment effects between  the singular spaces of $W_{L+1}(t)$ and $\Vr(t)$.} We train a $3$-layer linear neural network on the MNIST dataset and visualize the models at $3$, $5$, $7$, $9$ training epochs, respectively. In each figure, the $k'$-th row and $k$-th column pixel is the value of $|\bu_{k}(t)^{\top}\bv_{L+1,k'}(t)|$. Darker colors denote values close to $1$ while lighter colors denote values close to $0$. From the figures, we empirically observe that $|\bu_{k}(t)^{\top}\bv_{L+1,k'}(t)| = \mathbbm{1}\{k=k'\}$ approximately holds.}
    \label{align}
\end{figure}

\subsection{Assumptions} \label{sec:assumptions}
\begin{assumption}
\label{assump_1}
We assume that the initial values of the weight matrices satisfy $W_{i+1}^{\top}(0)W_{i+1}(0) = W_{i}(0)W_{i}^{\top}(0)$ for any $i \in [ L-1]$.
\end{assumption}
\begin{assumption}
\label{assump_3}
We assume $|\bu_{k}(t)^{\top}\bv_{L+1,k'}(t)| = \mathbbm{1}\{k=k'\}$ holds for all $t$, where $\bu_{k}(t)$ is the $k$-th left singular vector of $\Vr(t)$ and $\bv_{L+1,k'}(t)$ is the $k'$-th right singular vector of $W_{L+1}(t)$.
\end{assumption}
\textbf{Remark:} For Assumption~\ref{assump_1}, it can be achieved in practice by proper random initialization. For Assumption \ref{assump_3},  \cite{ji2018gradient} proved that under some assumptions, gradient descent optimization will drive consecutive layers of linear networks to satisfy it. We also provide empirical evidence in Fig.~\ref{align} to corroborate that this assumption approximately holds.

\subsection{Proof of Theorem \ref{multilayer} in the main text}
\begin{manualtheorem}{\ref{multilayer} (formally stated)}
Let $\sigma_k(t)$ for $k \in [d]$ be the $k$-th largest singular value of $\Vr(t)$. Then, under  Assumptions~\ref{assump_1} and~\ref{assump_3}, we have
\begin{equation}
    \dot{\sigma}_{k}(t) = N L\;(\sigma_{k}(t))^{2-\frac{2}{L}} \sqrt{(\sigma_{k}(t))^{\frac{2}{L}} + M}\;(\mathbf{u}_{L+1, k}(t))^{\top}G(t)\mathbf{v}_{k}(t),
\end{equation}
where $\mathbf{u}_{L+1, k} (t)$ is the $k$-th left singular vector of $W_{L+1}(t)$, $\mathbf{v}_{k}(t)$ is the $k$-th right singular vector of $\Vr(t)$, $M$ is a constant, and $G(t)$ is defined as
\begin{equation}
G(t) = \sum_{c=1}^{C}\mu_{c}(\mathbf{e}_{c}-\bar{\gamma}_{c}(t))\bar{X}_{c}^{\top}.
\end{equation}
\end{manualtheorem}

\begin{proof}
Recall that for $(L+1)$-layer linear neural networks, given the $i$-th training sample $X_{i}\in \bbR^{d}$, we have
\begin{equation}
    \gamma_{i}(t) = \mathrm{softmax}(W_{L+1}(t)\bz_{i}(t)) = \mathrm{softmax}(W_{L+1}(t)\Vr(t)X_{i}),
\end{equation}
and the loss is the standard cross-entropy loss defined as follows
\begin{equation}
    \ell(\Vr(t), W_{L+1}(t)) = \sum_{i=1}^{N} -\by_{i}^{\top}\log\gamma_{i}(t).
\end{equation}
By the chain rule, we can derive gradient of $\ell$ with respect to   $W_{L+1}$ and $\Vr$, which are respectively,
\begin{equation}
\frac{\partial \ell(\Vr(t), W_{L+1}(t))}{\partial W_{L+1}} = -(\by - \gamma(t))X^{\top}\Vr(t)^{\top},
\end{equation}
and
\begin{equation}
    \frac{\partial \ell(\Vr(t), W_{L+1}(t))}{\partial \Vr} = -W_{L+1}(t)^{\top}(\by - \gamma(t))X^{\top}.
\end{equation}
Next, under the gradient descent dynamics, the dynamics on $W_{L+1}$ satisfies
\begin{equation}
\label{dyn_c}
\dot{W}_{L+1}(t) = -\frac{\partial \ell(\Vr(t), W_{L+1}(t))}{\partial W_{L+1}} = (\by - \gamma(t))X^{\top}\Vr(t)^{\top},
\end{equation}
while the dynamics on $\Vr$ requires invoking Lemma~\ref{lemma_w_e_dyn}, which allows us to write 
\begin{equation}
\label{dyn_r}
\begin{aligned}
\dot{\Vr}(t) &= -\sum_{j=1}^{L} [\Vr(t)\Vr(t)^{\top}]^{\frac{L-j}{L}}\frac{\partial \ell(\Vr(t))}{\partial \Vr}[\Vr(t)^{\top}\Vr(t)]^{\frac{j-1}{L}}\\
&=\sum_{j=1}^{L} [\Vr(t)\Vr(t)^{\top}]^{\frac{L-j}{L}}W_{L+1}(t)^{\top}(\by - \gamma(t))X^{\top}[\Vr(t)^{\top}\Vr(t)]^{\frac{j-1}{L}}.
\end{aligned}
\end{equation}

Next, we invoke Lemma~\ref{lemma_sigma_dyn} on Eqn.~\eqref{dyn_r} and Eqn.~\eqref{dyn_c}, respectively, yielding:
\begin{equation}
\label{sigma_k}
\begin{aligned}
    \dot{\sigma}_{k}(t) 
    &= (\bu_{k}(t))^{\top}\dot{\Vr}(t)(\bv_{k}(t)) \\
    &= \sum_{j=1}^{L}{\bu_{k}(t)}^{\top}[\Vr(t)\Vr(t)^{\top}]^{\frac{L-j}{L}}W_{L+1}(t)^{\top}(\by - \gamma(t))X^{\top} [\Vr(t)^{\top}\Vr(t)]^{\frac{j-1}{L}}\bv_{k}(t) \\
    &= L(\sigma_{k}(t))^{2-\frac{2}{L}}{\bu_{k}(t)}^{\top}W_{L+1}(t)^{\top}(\by-\gamma(t))X^{\top}\bv_{k}(t)  \\ 
    &\hspace{4.0in}\text{(SVD on $\Vr(t)$)} \\
    &= L (\sigma_{k}(t))^{2-\frac{2}{L}}\sum_{k'}\sigma_{L+1,k'}{\bu_{k}(t)}^{\top}\bv_{L+1,k'}(t)(\bu_{L+1,k'}(t))^{\top}(\by-\gamma(t))X^{\top}\bv_{k}(t) \\ 
    &\hspace{4.0in}\text{(SVD on $W_{L+1}(t)$)} \\
    &= L (\sigma_{k}(t))^{2-\frac{2}{L}}\sigma_{L+1,k}(\bu_{L+1,k}(t))^{\top}(\by-\gamma(t))X^{\top}\bv_{k}(t) \;\;\;\;\quad \text{(Assumption~\ref{assump_3}).}
\end{aligned}
\end{equation}
and
\begin{equation}
\label{sigma_L_1_k}
\begin{aligned}
    \dot{\sigma}_{L+1,k}(t) &= \bu_{L+1,k}(t)^{\top}(\by - \gamma(t))X^{\top}\Vr(t)^{\top}\bv_{L+1,k}(t) \\
    &= \sum_{k'}\sigma_{k'}\bu_{L+1,k}(t)^{\top}(\by - \gamma(t))X^{\top}\bv_{k'}(t)\bu_{k'}^{\top}\bv_{L+1,k}(t) \\
    &= \sigma_{k}\bu_{L+1,k}(t)^{\top}(\by - \gamma(t))X^{\top}\bv_{k}(t) \;\;\;\;\quad \text{(Assumption~\ref{assump_3}).}
\end{aligned}
\end{equation}

Combining Eqns.~\eqref{sigma_k} and~\eqref{sigma_L_1_k}, we have:
\begin{equation}
    \frac{1}{L}(\dot{\sigma}_{k}(t))(\sigma_{k}(t))^{\frac{2}{L}-1} = \sigma_{L+1,k}(t)(\dot{\sigma}_{L+1,k}(t)).
\end{equation}
Next, apply integration on both sides, which yields
\begin{equation}
\label{sigma_transfer}
     (\sigma_{L+1,k}(t))^{2} =  (\sigma_{k}(t))^{\frac{2}{L}} + M,
\end{equation}
where $M$ a constant.

By Eqn.~\eqref{sigma_transfer}, Eqn.~\eqref{sigma_k} can be rewritten as
\begin{equation}
\label{sigma_k_final}
\dot{\sigma}_{k}(t) = L (\sigma_{k}(t))^{2-\frac{2}{L}}\sqrt{(\sigma_{k}(t))^{\frac{2}{L}} + M}\;(\bu_{L+1,k}(t))^{\top}(\by-\gamma(t))X^{\top}\bv_{k}(t).
\end{equation}

Finally, notice that $(\by-\gamma(t))X^{\top}$ can be rewritten as
\begin{equation}
\label{rewrite_G}
(\by-\gamma(t))X^{\top} = \sum_{i=1}^{N}(\by-\gamma_{i}(t))X_{i}^{\top}=N\sum_{c=1}^{C}\mu_{c}(\be_{c} - \bar{\gamma}_{c}(t))\bar{X}_{c}^{\top}.
\end{equation}
We further substitute Eqn.~\eqref{rewrite_G} into Eqn.~\eqref{sigma_k_final} and   obtain
\begin{equation}
    \dot{\sigma}_{k}(t) = N L (\sigma_{k}(t))^{2-\frac{2}{L}}\sqrt{(\sigma_{k}(t))^{\frac{2}{L}} + M}\;(\bu_{L+1,k}(t))^{\top}G(t)\bv_{k}(t),
\end{equation}
where $G(t)$ is defined as
\begin{equation}
    G(t) = \sum_{c=1}^{C}\mu_{c}(\be_{c} - \bar{\gamma}_{c}(t))\bar{X}_{c}^{\top}.
\end{equation}
This completes the proof.
\end{proof}

\section{Proof of Proposition 1 in the Main Paper} \label{app:proof_prop}
\begin{manualproposition}{\ref{prop_decorr} (restated)}
For a $d$-by-$d$ correlation matrix $K$ with singular values $(\lambda_{1},\ldots,\lambda_{d})$, we have:
\begin{equation}
   \sum_{i=1}^{d}\bigg(\lambda_{i} - \frac{1}{d}\sum_{j=1}^{d}\lambda_{j}\bigg)^{2} = \|K\|_{\mathrm{F}}^{2} - d.
\end{equation}
\end{manualproposition}
\begin{proof}
Given a $d$-by-$d$ correlation matrix $K$, since the diagonal entries of $K$ are all $1$, we have
\begin{equation}
\label{eigen_trace}
    \sum_{i=1}^{d}\lambda_{i} = \tr(K) = d.
\end{equation}
This is because for any symmetric positive definite matrix, the sum  of all singular values equals the trace of the matrix.

Next, for the left-hand side of   Eqn.~\eqref{eqn:prop_decorr}, we have:
\begin{equation}
\begin{aligned}
\sum_{i=1}^{d}\Big(\lambda_{i} - \frac{1}{d}\sum_{j=1}^{d}\lambda_{j}\Big)^{2} &= \sum_{i=1}^{d}(\lambda_{i} - 1)^{2} &\text{(Plug-in Eqn.~\eqref{eigen_trace})}\\
&= \sum_{i=1}^{d}\lambda^{2}_{i} - 2\sum_{i=1}^{d}\lambda_{i} + d \\
&= \sum_{i=1}^{d}\lambda^{2}_{i} - d &\text{(Plug-in Eqn.~\eqref{eigen_trace})}.
\end{aligned}
\end{equation}
Next, for the right-hand side of    Eqn.~\eqref{eqn:prop_decorr}, we have:
\begin{equation}
\begin{aligned}
   \|K\|_{\mathrm{F}}^{2} -d &= \tr(K^{\top}K) - d \\ 
   &= \tr(U S V^{\top}V S^{\top} U^{\top}) - d &\qquad\text{(Apply SVD on $K$)} \\
   &= \tr(U S S^{\top}U^{\top}) - d \\
   &= \sum_{i=1}^{n}\lambda_{i}^{2} - d.
\end{aligned}
\end{equation}
Therefore, we have shown that the left-hand side of   Eqn.~\eqref{eqn:prop_decorr} equals its right-hand side.
\end{proof}

\section{Additional Empirical Analyses}
\label{app: additional_expt}
\subsection{Computational Efficiency}
\label{app: comp_cost}
We demonstrate  {\sc FedDecorr}'s advantage vis-\`a-vis some of its competitors in terms of its computational efficiency.
We compare  {\sc FedDecorr}  with some other methods that also apply additional regularization terms during local training such as FedProx and MOON.
We partition CIFAR10, CIFAR100 and TinyImageNet into $10$ clients with $\alpha=0.5$ and report the total computation times required for one round of training for FedAvg, FedProx, MOON, and  {\sc FedDecorr} .
Results are shown in Tab.~\ref{comp_cost_table}. All results are produced with a NVIDIA Tesla V100 GPU. We see that  {\sc FedDecorr}  incurs   a negligible computation overhead on top of the na\"ive FedAvg, while FedProx and MOON introduce about $0.5\sim 1\times$ additional computation cost. The advantage of  {\sc FedDecorr}  in terms of efficiency is mainly because it only involves calculating the Frobenius norm of a matrix which is extremely cheap. Indeed this regularization operates on the output representation vectors of the model, without requiring computing parameter-wise regularization like FedProx nor extra forward passes like MOON.

\begin{table}[H]
\centering
\definecolor{hl}{gray}{0.9}
\renewcommand{\arraystretch}{1.0}{
\setlength{\tabcolsep}{1.2mm}{
\begin{tabular}{l|ccc}
\toprule
& CIFAR10 & CIFAR100 & TinyImageNet \\
\midrule
FedAvg & 6.7 & 6.9 & 25.4  \\
%\midrule
FedProx & 12.1 & 12.3 & 33.2  \\
%\midrule
MOON  & 12.2 & 12.7 & 38.1 \\
%\midrule
\cellcolor{hl}{{\sc FedDecorr}} & \cellcolor{hl}{6.9} & \cellcolor{hl}{7.1} & \cellcolor{hl}{25.7} \\
\bottomrule
\end{tabular}
\caption{\textbf{Comparison of computation times.} We report the total computation times (in minutes) for one round of training on the three datasets for FedAvg, FedProx, MOON, and {\sc FedDecorr}. Here, {\sc FedDecorr} stands for applying {\sc FedDecorr} to FedAvg.}
\label{comp_cost_table}
}
}
\end{table}

% comparing with other decorrelation methods
\subsection{Comparison with Other Decorrelation Methods}
\label{app: compare_decorr}
Some decorrelation regularizations such as DeCov \citep{cogswell2015reducing} and Structured-DeCov~\citep{xiong2016regularizing} were proposed to improve the generalization capabilities in standard classification tasks. Both  these  methods  operate directly on the covariance matrix of the representations instead of the correlation matrix like our proposed method---{\sc FedDecorr}. To compare our  {\sc FedDecorr}  with the existing decorrelation methods, we follow the same procedure as in  {\sc FedDecorr}  and apply DeCov and Structured-DeCov during local training. Our experiments are based on TinyImageNet and FedAvg. TinyImageNet is partitioned into $10$ clients according to various $\alpha$'s.
Results are shown in Tab.~\ref{compare_decorr_table}. Surprisingly, we see that unlike our  {\sc FedDecorr} which steadily improves the baseline, adding DeCov or Structured-DeCov both degrade the performance in federated learning. We conjecture that this is because directly regularizing the covariance matrix is highly unstable, leading to undesired modification on the representations. This experiment shows that our design of regularization of the {\em  correlation matrix} instead of the {\em covariance matrix} is of paramount importance.

\begin{table}[H]
\vspace{-1mm}
\centering
\definecolor{hl}{gray}{0.9}
\renewcommand{\arraystretch}{1.0}{
\setlength{\tabcolsep}{0.9mm}{
\begin{tabular}{c|cccc}
\toprule
& FedAvg & DeCov & St.-Decov & \cellcolor{hl}{ {\sc FedDecorr}} \\
\midrule
$\alpha=0.05$ & 35.02 & 32.88 & 32.04 & \cellcolor{hl}{40.29} \\
%\midrule
$\alpha=0.1$ & 39.30 & 37.29 & 37.74 & \cellcolor{hl}{43.86} \\
%\midrule
$\alpha=0.5$ & 46.92 & 46.29 & 45.85 & \cellcolor{hl}{50.01} \\
\bottomrule
\end{tabular}
\caption{\textbf{Comparison with other decorrelation methods.} Based on FedAvg and the TinyImageNet dataset, we use different decorrelation regularizers in local training.}
\label{compare_decorr_table}
}
}
\end{table}

\subsection{Experiments on Other Model Architectures}
\label{app: compare_on_other_arch}
In this section, we demonstrate the effectiveness of our method across different model architectures. Here, besides the MobileNetV2 used in the main paper, we also experiment on ResNet18  and ResNet32. Note that ResNet18 is the wider ResNet whose representation dimension is $512$ and ResNet32 is the narrower ResNet whose representation dimension is $64$. The coefficient of the FedDecorr objective is set to be $0.1$ as suggested to be a good universal value of $\beta$ in the paper. The heterogeneity parameter $\alpha$ is set to be $0.05$ and we use the CIFAR10 dataset. Our results are shown in Tab.~\ref{tab: compare_arch}. As can be seen, FedDecorr yields consistent improvements across different neural network architectures. One interesting phenomenon is that the improvements brought about by FedDecorr are much larger on wider networks (e.g., MobileNetV2, ResNet18) than on narrower ones (e.g. ResNet32). We conjecture this is because the dimension of the ambient space of wider networks are clearly higher than that of shallower networks. Therefore, relatively speaking, the dimensional collapse caused by data heterogeneity will be more severe for wider networks.

\begin{table}[H]
\vspace{-1mm}
\centering
\definecolor{hl}{gray}{0.9}
\renewcommand{\arraystretch}{1.0}{
\setlength{\tabcolsep}{0.9mm}{
\begin{tabular}{l|ccc}
\toprule
& MobileNetV2 & ResNet18 & ResNet32 \\
\midrule
FedAvg &  64.85 & 71.51 & 65.76 \\
\cellcolor{hl}{+ {\sc FedDecorr}} & \cellcolor{hl}{73.06} & \cellcolor{hl}{76.54} & \cellcolor{hl}{67.21} \\
\bottomrule
\end{tabular}
\caption{\textbf{Effectiveness of {\sc FedDecorr} on other model architectures.}}
\label{tab: compare_arch}
}
}
\end{table}

\subsection{Comparison with Other Federated Learning Baselines}
\label{app: compare_other_baseline}
In this section, we compare {\sc FedDecorr} with two other baselines, namely Scaffold \citep{karimireddy2020scaffold} and FedNova \citep{wang2020tackling}.
We use the same experimental setups as in the main paper to implement these two baselines.
Results on CIFAR10/100 and TinyImageNet are shown in Tab.~\ref{tab: cifar_app} and Tab.~\ref{tab: tiny_app}, respectively. As shown in the tables, across various datasets and degrees of heterogeneity, adding {\sc FedDecorr} on top of a baseline method can  outperform the baselines when there is some heterogeneity across the agents, i.e., $\alpha<\infty$.

\definecolor{hl}{gray}{0.9}
{\footnotesize 
\renewcommand{\arraystretch}{1.0}{
\setlength{\tabcolsep}{0.4mm}{
\begin{table*}[t]
\centering
\begin{tabular}{lcccccccccc}
\toprule
\multirow{2.5}{*}{Method} & \multicolumn{4}{c}{CIFAR10} && \multicolumn{4}{c}{CIFAR100}  \\
\cmidrule{2-5}\cmidrule{7-10}
& $\alpha =0.05$ & $0.1 $ &  $ 0.5$ & $\infty$ &&  $0.05$ & $0.1$ & $0.5$ & $\infty$ \\
\midrule
\small{Scaffold} & 51.99\tiny{$\pm$2.54} & 74.36\tiny{$\pm$3.10} & 87.05\tiny{$\pm$0.39} & 89.77\tiny{$\pm$0.24} && 54.51\tiny{$\pm$0.26} & 61.42\tiny{$\pm$0.54} & 68.37\tiny{$\pm$0.44} & 70.97\tiny{$\pm$0.04} \\
%\midrule
\small{FedNova} & 63.07\tiny{$\pm$1.59} & 79.98\tiny{$\pm$1.56} & 90.23\tiny{$\pm$0.41} & 92.39\tiny{$\pm$0.18} && 60.22\tiny{$\pm$0.33} & 66.43\tiny{$\pm$0.26} & 71.79\tiny{$\pm$0.17} & 74.47\tiny{$\pm$0.13} \\
\midrule
\small{FedAvg} & 64.85\tiny{$\pm$2.01} & 76.28\tiny{$\pm$1.22} & 89.84\tiny{$\pm$0.13} & 92.39\tiny{$\pm$0.26} && 59.87\tiny{$\pm$0.25} & 66.46\tiny{$\pm$0.16} & 71.69\tiny{$\pm$0.15} & \textbf{74.54}\tiny{$\pm$0.15}  \\
\cellcolor{hl}{\small{+ {\sc FedDecorr}}} &
\cellcolor{hl}{73.06}\tiny{$\pm$0.81} & \cellcolor{hl}{80.60}\tiny{$\pm$0.91} & \cellcolor{hl}{89.84}\tiny{$\pm$0.05} &
\cellcolor{hl}{92.19}\tiny{$\pm$0.10} &\cellcolor{hl}{}& \cellcolor{hl}{\textbf{61.53}}\tiny{$\pm$0.11} & \cellcolor{hl}{\textbf{67.12}}\tiny{$\pm$0.09} &
\cellcolor{hl}{\textbf{71.91}}\tiny{$\pm$0.04} &
\cellcolor{hl}{73.87}\tiny{$\pm$0.18} \\
\midrule
\small{FedProx} & 64.11\tiny{$\pm$0.84} & 76.10\tiny{$\pm$0.40} & 89.57\tiny{$\pm$0.04} &
92.38\tiny{$\pm$0.09}
&& 60.02\tiny{$\pm$0.46} & 66.41\tiny{$\pm$0.27} & 71.78\tiny{$\pm$0.19} & 74.34\tiny{$\pm$0.03} \\
\cellcolor{hl}{\small{+ {\sc FedDecorr}}} &
\cellcolor{hl}{71.38}\tiny{$\pm$0.81} & \cellcolor{hl}{\textbf{81.74}}\tiny{$\pm$0.34} & \cellcolor{hl}{89.96}\tiny{$\pm$0.26} &
\cellcolor{hl}{92.14}\tiny{$\pm$0.20} &\cellcolor{hl}{}& \cellcolor{hl}{61.33}\tiny{$\pm$0.19} & \cellcolor{hl}{67.00}\tiny{$\pm$0.46} & \cellcolor{hl}{71.64}\tiny{$\pm$0.10} &
\cellcolor{hl}{74.15}\tiny{$\pm$0.06} \\
\midrule
\small{FedAvgM} & 71.34\tiny{$\pm$0.71} & 77.51\tiny{$\pm$0.58} & 88.39\tiny{$\pm$0.17} &
91.35\tiny{$\pm$0.15}
&& 59.64\tiny{$\pm$0.20} & 66.36\tiny{$\pm$0.14} & 71.17\tiny{$\pm$0.22} & 74.20\tiny{$\pm$0.16} \\
\cellcolor{hl}{\small{+ {\sc FedDecorr}}} &
\cellcolor{hl}{\textbf{73.60}}\tiny{$\pm$0.82} & \cellcolor{hl}{79.21}\tiny{$\pm$0.15} & \cellcolor{hl}{88.70}\tiny{$\pm$0.26} &
\cellcolor{hl}{91.33}\tiny{$\pm$0.13} &\cellcolor{hl}{}& \cellcolor{hl}{61.48}\tiny{$\pm$0.27} & \cellcolor{hl}{66.60}\tiny{$\pm$0.11} & \cellcolor{hl}{71.26}\tiny{$\pm$0.21} &
\cellcolor{hl}{73.86}\tiny{$\pm$0.25} \\
\midrule
\small{MOON} & 68.79\tiny{$\pm$0.69} & 78.70\tiny{$\pm$0.66} & 90.08\tiny{$\pm$0.10} &
92.62\tiny{$\pm$0.17}
&& 56.79\tiny{$\pm$0.17} & 65.48\tiny{$\pm$0.29} & 71.81\tiny{$\pm$0.14} & 74.30\tiny{$\pm$0.12} \\
\cellcolor{hl}{\small{+ {\sc FedDecorr}}} &
\cellcolor{hl}{73.46}\tiny{$\pm$0.84} & \cellcolor{hl}{81.63}\tiny{$\pm$0.55} & \cellcolor{hl}{\textbf{90.61}}\tiny{$\pm$0.05} &
\cellcolor{hl}{\textbf{92.63}}\tiny{$\pm$0.19} &\cellcolor{hl}{}& \cellcolor{hl}{59.43}\tiny{$\pm$0.34} & \cellcolor{hl}{66.12}\tiny{$\pm$0.20} & \cellcolor{hl}{71.68}\tiny{$\pm$0.05} &
\cellcolor{hl}{73.70}\tiny{$\pm$0.25} \\
\bottomrule
\end{tabular}
\caption{\textbf{CIFAR10/100 Experiments.} We run experiments under various degrees of heterogeneity ($\alpha\in \{0.05, 0.1, 0.5, \infty\}$) and report the test accuracy (\%). All results are (re)produced by us and are averaged over $3$ runs (mean$\,\pm\, $std). Bold font highlights the highest accuracy in each column. We add results of Scaffold and FedNova comparing to Tab.~\ref{main_cifar} in the main paper.}

\label{tab: cifar_app}
\end{table*}
}
}
}

\begin{table*}[t]
\centering
\definecolor{hl}{gray}{0.9}
\renewcommand{\arraystretch}{1.0}{
\setlength{\tabcolsep}{0.3mm}{
\begin{tabular}{lcccc}
\toprule
\multirow{2.5}{*}{Method} & \multicolumn{4}{c}{TinyImageNet} \\
\cmidrule{2-5}
& $\alpha=0.05$ & $0.1$ &  $0.5$ & $\infty$ \\
\midrule
\small{Scaffold} & 35.16\tiny{$\pm$0.77} & 37.87\tiny{$\pm$0.78} & 44.24\tiny{$\pm$0.14} & 44.88\tiny{$\pm$0.29} \\
\small{FedNova} & 35.28\tiny{$\pm$0.04} & 39.73\tiny{$\pm$0.07} & 47.05\tiny{$\pm$0.42} & 49.57\tiny{$\pm$0.09} \\
\midrule
\small{FedAvg} & 35.02\tiny{$\pm$0.46} & 39.30\tiny{$\pm$0.23} & 46.92\tiny{$\pm$0.25} & 49.33\tiny{$\pm$0.19} \\
\cellcolor{hl}{\small{+ {\sc FedDecorr}}} &
\cellcolor{hl}{40.29}\tiny{$\pm$0.18} & \cellcolor{hl}{43.86}\tiny{$\pm$0.50} & \cellcolor{hl}{50.01}\tiny{$\pm$0.27} &
\cellcolor{hl}{52.63}\tiny{$\pm$0.26} \\
\midrule
\small{FedProx} & 35.20\tiny{$\pm$0.30} & 39.66\tiny{$\pm$0.43} & 47.16\tiny{$\pm$0.07} &
49.76\tiny{$\pm$0.36} \\
\cellcolor{hl}{\small{+ {\sc FedDecorr}}} &
\cellcolor{hl}{\textbf{40.63}}\tiny{$\pm$0.05} & \cellcolor{hl}{44.19}\tiny{$\pm$0.14} & \cellcolor{hl}{50.26}\tiny{$\pm$0.27} &
\cellcolor{hl}{52.37}\tiny{$\pm$0.36} \\
\midrule
\small{FedAvgM} & 34.81\tiny{$\pm$0.09} & 39.72\tiny{$\pm$0.11} & 47.11\tiny{$\pm$0.04} &
49.67\tiny{$\pm$0.25} \\
\cellcolor{hl}{\small{+ {\sc FedDecorr}}} &
\cellcolor{hl}{39.97}\tiny{$\pm$0.23} & \cellcolor{hl}{43.95}\tiny{$\pm$0.26} & \cellcolor{hl}{50.14}\tiny{$\pm$0.11} &
\cellcolor{hl}{52.05}\tiny{$\pm$0.37} \\
\midrule
\small{MOON} & 35.23\tiny{$\pm$0.26} & 40.53\tiny{$\pm$0.28} & 47.25\tiny{$\pm$0.66} &
50.48\tiny{$\pm$0.57} \\
\cellcolor{hl}{\small{+ {\sc FedDecorr}}} &
\cellcolor{hl}{40.40}\tiny{$\pm$0.24} & \cellcolor{hl}{\textbf{44.20}}\tiny{$\pm$0.22} & \cellcolor{hl}{\textbf{50.81}}\tiny{$\pm$0.51} &
\cellcolor{hl}{\textbf{53.01}}\tiny{$\pm$0.45} \\

\bottomrule
\end{tabular}
\caption{\textbf{TinyImageNet Experiments.} We run with $\alpha\in  \{0.05,0.1,0.5,\infty\}$  and report the test accuracies (\%). All results are (re)produced by us and are averaged over $3$ runs (mean$\,\pm\,$std is reported). Bold font highlights the highest accuracy in each column. We add results of Scaffold and FedNova comparing to Tab.~\ref{main_tiny} in the main paper.}
\label{tab: tiny_app}
}
}
\end{table*}

\subsection{Experiments on Another Type of Heterogeneity}
\label{app: other_type_heterogeneity}
In this section, we run experiments under another type of data heterogeneity. Specifically, we follow \cite{mcmahan2017communication} and split the CIFAR10 dataset across different clients such that each client only has a fixed number of classes $C$ (e.g., $C=2$ indicates each client only has data of two classes). We split the data across $10$ clients and choose $C$ to be $2$ and $3$. Results are shown in Tab.~\ref{tab: other_heterogeneity}. As can be observed, under this different heterogeneity scenario, {\sc FedDecorr} also yields noticeable and consistent improvements.

\begin{table}[H]
\vspace{-1mm}
\centering
\definecolor{hl}{gray}{0.9}
\renewcommand{\arraystretch}{1.0}{
\setlength{\tabcolsep}{0.9mm}{
\begin{tabular}{l|cc}
\toprule
& $C=2$ & $C=3$ \\
\midrule
FedAvg & 45.61  & 67.53  \\
\cellcolor{hl}{+ {\sc FedDecorr}} & \cellcolor{hl}{47.63} & \cellcolor{hl}{74.51} \\
\bottomrule
\end{tabular}
\caption{{\sc FedDecorr} yields noticeable and consistent improvements under another type of data heterogeneity.}
\label{tab: other_heterogeneity}
}
}
\end{table}

\section{Additional Visualizations on Global Models}
\label{vis_global_other}
In this section, we provide additional visualizations on global models with different federated learning methods, model architectures, and datasets. Through our extensive experimental results, we demonstrate that dimensional collapse is a general problem under heterogeneous data in federated learning.

\subsection{Visualization on Global Models of Other Federated Learning Methods}
In the main text, we have shown that global models produced by FedAvg \citep{mcmahan2017communication} suffer stronger dimensional collapse with increasing data heterogeneity. To further show such dimensional collapse phenomenon is a general problem in federated learning, we visualized global models produced by other federated learning methods such as FedAvg with server momentum \citep{hsu2019measuring}, FedProx \citep{li2020federated}, and MOON \citep{li2021model}. Specifically, we follow the same procedure as in the main text and plot the singular values of covariance matrices of representations.
Results are shown in Fig.~\ref{fig: dim_collapse_noniid_others}. From the figure, one can see that all these three other methods also demonstrated the similar hazard of dimensional collapse as in FedAvg.

\begin{figure}
    \centering
    \includegraphics[width=0.95\textwidth]{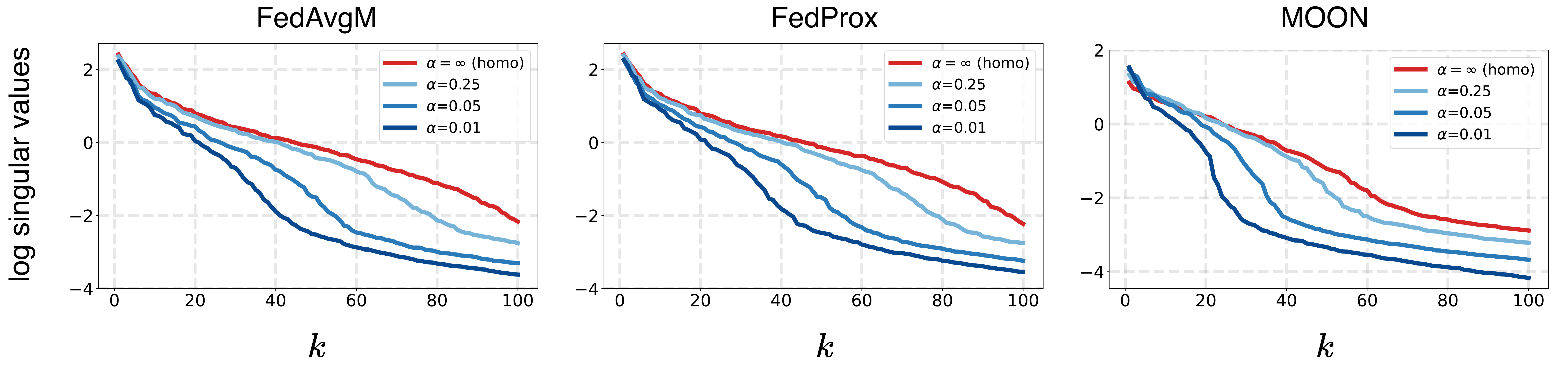}
    \vspace{-5mm}
    \caption{Data heterogeneity causes  similar dimensional collapse  on other federated learning methods such as FedAvgM \citep{hsu2019measuring}, FedProx \citep{li2020federated}, and MOON \citep{li2021model}. The x-axis ($k$) is the index of singular values.}
    %\vspace{-5mm}
    \label{fig: dim_collapse_noniid_others}
\end{figure}

\begin{figure}
    \centering
    \includegraphics[width=0.7\textwidth]{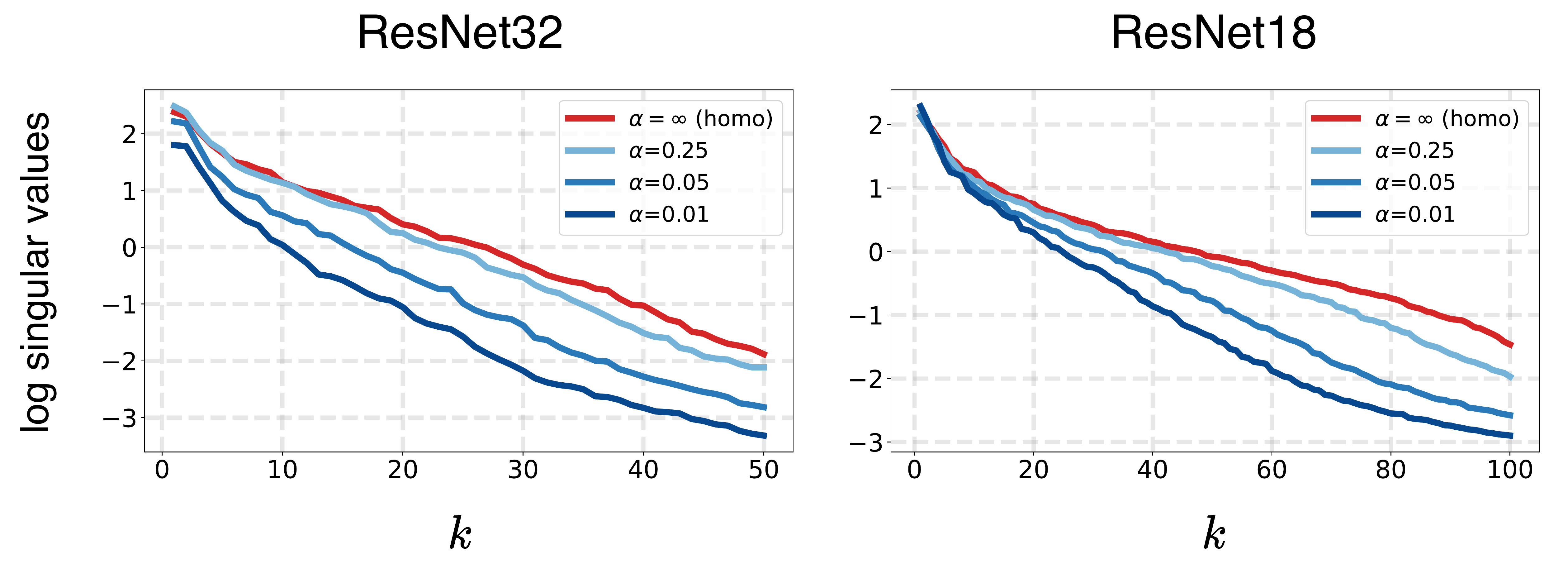}
    \vspace{-5mm}
    \caption{Data heterogeneity causes similar dimensional collapse on other model architectures during federated learning. The x-axis ($k$) is the index of singular values.}
    %\vspace{-5mm}
    \label{fig: dim_collapse_noniid_other_arch}
\end{figure}

\begin{figure}
    \centering
    \includegraphics[width=0.7\textwidth]{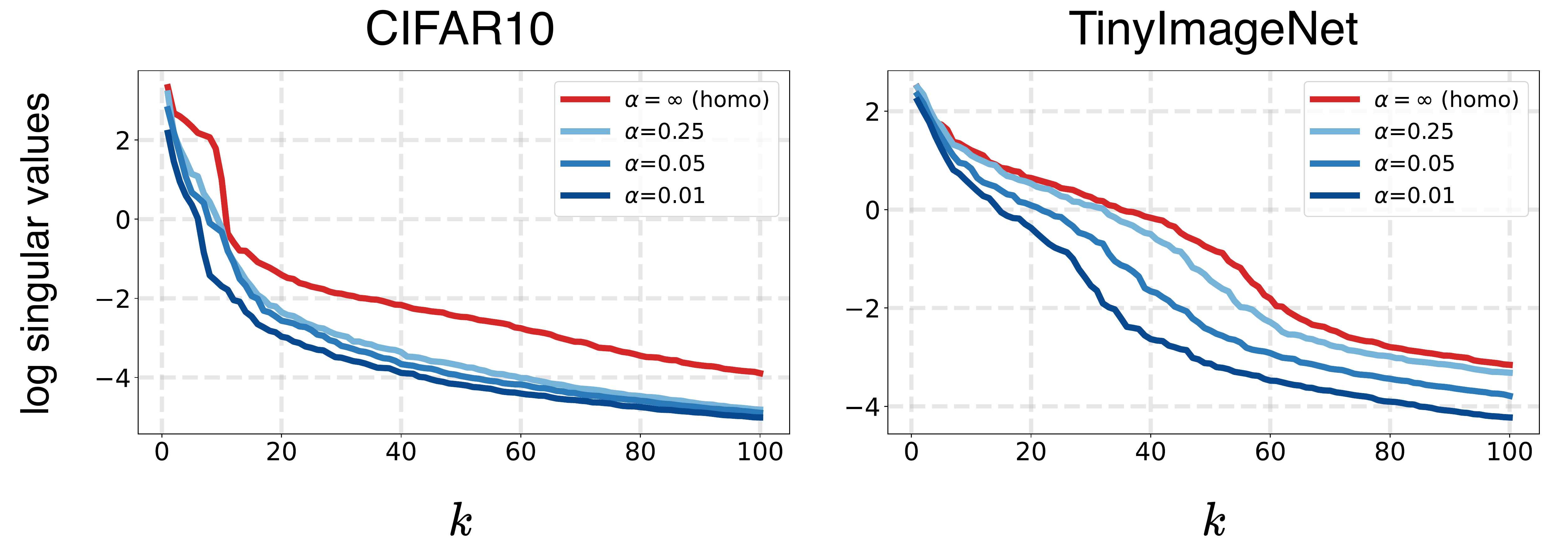}
    \vspace{-5mm}
    \caption{Data heterogeneity causes similar dimensional collapse on other datasets during federated learning. The x-axis ($k$) is the index of singular values.}
    \vspace{-4mm}
    \label{fig: dim_collapse_noniid_other_data}
\end{figure}

\subsection{Visualization on Global Models of Other Model Architectures}
In the main text, we have shown the dimensional collapse on global models caused by data heterogeneity with MobileNetV2. In this section, we perform the similar visualization based on other model architectures such as ResNet32 and ResNet18. Note that ResNet32 is a narrower ResNet whose representation dimension is $64$ and ResNet18 is a wider ResNet whose representation dimension is $64$.
We visualize the top $50$ singular values for ResNet32 and the top $100$ singular values for ResNet18. Results are shown in Fig.~\ref{fig: dim_collapse_noniid_other_arch}. From the figure, one can observe that heterogeneous data also lead to dimensional collapse on ResNet32 and ResNet18.

\subsection{Visualization on Global Models of Other Datasets}
In the main text, we use the CIFAR100 dataset for our visualizations. In this section, we perform  similar visualizations with other datasets such as CIFAR10 and TinyImageNet. Results are shown in Fig.~\ref{fig: dim_collapse_noniid_other_data}. From the figure, one can also observe that dimensional collapse results from data heterogeneity.

\section{Visualization on Other Local Clients}
\label{app:visual}
In the main text Fig.~\ref{dim_collapse_noniid}(b), under the four different degrees of data heterogeneity (i.e., $\alpha\in\{0.01, 0.05, 0.25, \infty\}$), we compare representations of local models of client $1$ and empirically show how data heterogeneity affects representations produced by the local models. In this section, to further corroborate our conclusion, we follow the same procedure and visualize singular values of the covariance matrix of representations produced by local models trained on the rest of the $9$ clients under the same $\alpha$'s. Results are shown in Fig.~\ref{appendix_dim_collapse}. From the results, we can obtain the similar observations as in Fig.~\ref{dim_collapse_noniid}(b) of the main text, namely that stronger data heterogeneity causes more severe dimensional collapse for local models.

\begin{figure}
    \centering
    \includegraphics[width=\textwidth]{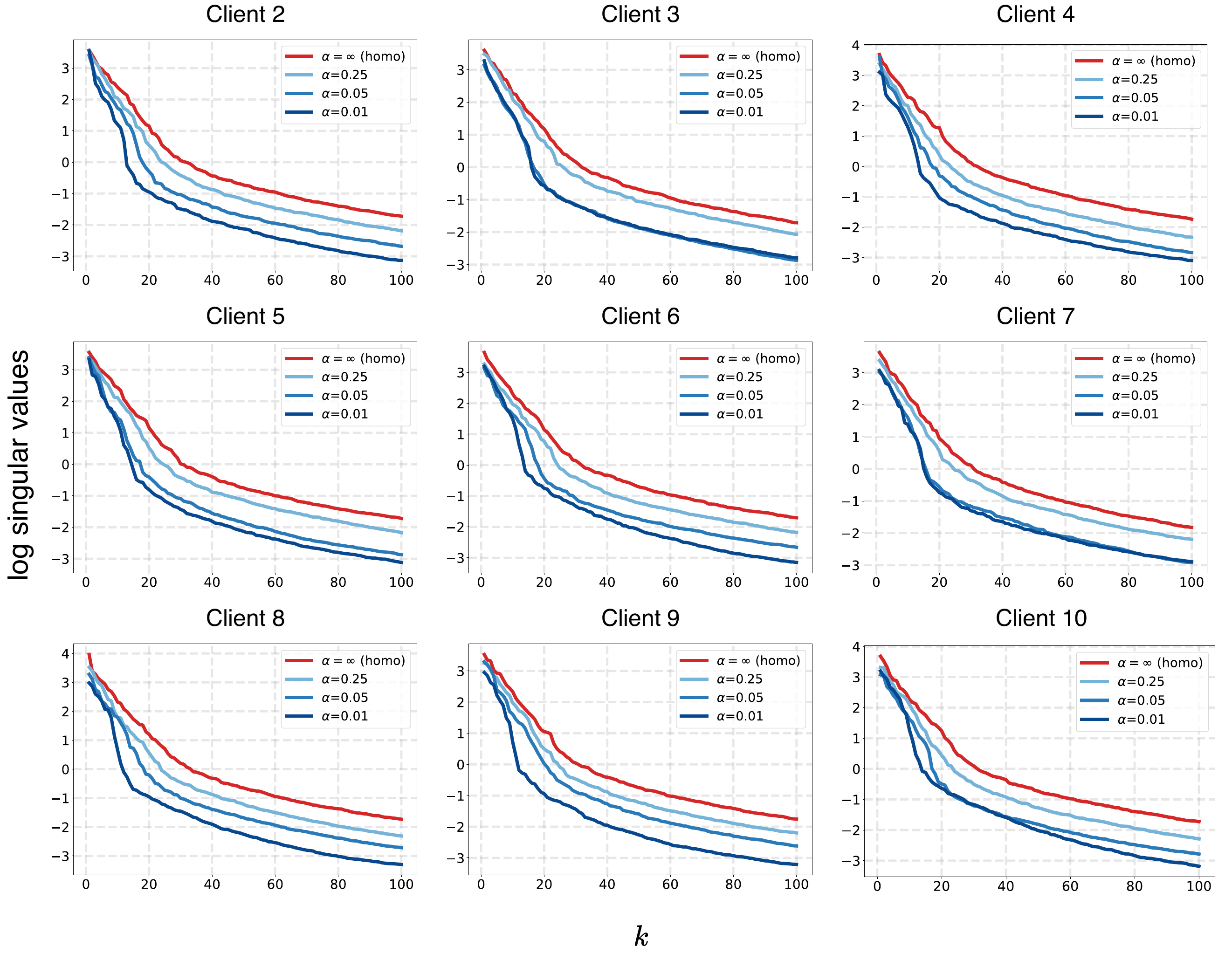}
    \vspace{-4mm}
    \caption{\textbf{Heterogeneous local training data cause dimensional collapse.} For each of the clients, given the four models trained under different degrees of heterogeneity, we plot the singular values of covariance matrix of representations in descending orders (the results of client $1$ are shown in main text Fig.~\ref{dim_collapse_noniid}(b)). Representations are computed over the CIFAR100 test set. The $x$-axis ($k$) is the index of singular values and the $y$-axis is the logarithm of the singular values.}
    \vspace{-0.5cm}
    \label{appendix_dim_collapse}
\end{figure}

\section{Hyperparameters of Other Federated Learning Methods}
\label{app: baseline_hyperparam}
The regularization coefficient of FedProx \citep{li2020federated} $\mu$ is tuned across $\{10^{-4}, 10^{-3}, 10^{-2}, 10^{-1}\}$ and is selected to be $\mu=10^{-3}$; the regularization coefficient of MOON \citep{li2021model} $\mu$ is tuned across $\{0.1, 1.0, 5.0, 10.0\}$ and is selected to be $\mu=1.0$; the server momentum of FedAvgM \citep{hsu2019measuring} $\rho$ is tuned across $\{0.1, 0.5, 0.9\}$ and is selected to be $\rho=0.5$.

\section{Pseudo-code of {\sc FedDecorr}} \label{app:pseudo}
Here, we provide a pytorch-style pseudo-code for {\sc FedDecorr} in Alg.~\ref{pseudocode}.
All {\sc FedDecorr}-specific components are highlight in blue.
As indicated in the pseudocode, the only additional operation of {\sc FedDecorr} is in adding a regularization term   $L_{\mathrm{FedDecorr}}(w,X)$ defined in Eqn.~\eqref{eqn:LFedDecorr}. This shows that {\sc FedDecorr} is an extremely convenient plug-and-play federated learning method.

\begin{algorithm}[tbp]
   \caption{PyTorch-style Pseudocode for {\sc FedDecorr}. (\textcolor{blue}{Blue} highlights {\sc FedDecorr}-specific code)}
   \label{pseudocode}
    \definecolor{codeblue}{rgb}{0.25,0.5,0.5}
    \lstset{
      basicstyle=\fontsize{7.2pt}{7.2pt}\ttfamily\bfseries,
      commentstyle=\fontsize{7.2pt}{7.2pt}\color{codeblue},
      keywordstyle=\fontsize{7.2pt}{7.2pt},
    }
\begin{lstlisting}[
language=python,
escapechar=!
]
!\textcolor{blue}{def FedDecorrLoss(z):}!
    # N: batch size
    # d: representation dimension
    # z: a batch of representation, with shape (N, d)
    !\textcolor{blue}{N,d = z.shape}!

    # z-score normalization
    !\textcolor{blue}{z = (z - z.mean(0)) / z.std(0)}!

    # estimate correlation matrix
    !\textcolor{blue}{corr\_mat = 1/N*torch.matmul(z.t(), z)}!

    # calculate FedDecorr loss
    !\textcolor{blue}{loss\_fed\_decorr = (corr\_mat.pow(2)).mean()}!
    !\textcolor{blue}{return loss\_fed\_decorr}!

def LocalTraining(train_loader, local_epochs, beta):
    for e in range(local_epochs):
        for data, targets in train_loader:
            # forward propagation.
            # given the batch of data, compute batch representations z and loss
            loss, z = ...

            !\textcolor{blue}{loss += beta*FedDecorr(z)}!

            # back propagation and update local model parameters
            ...

def GlobalAggregation():
    # receiving models from each clients
    ...
    # aggregating local models with certain schemes
    ...
    # sending aggregated models back to clients
    ...

def main():
    # n_comm_round: number of communication rounds.
    # train_loader: data loader of training data.
    # n_local_epochs: number of local trainig epochs on each client.
    # beta: coefficient of the FedDecorr regularization.
    for comm in range(n_comm_round):
        LocalTraining(train_loader, n_local_epochs, beta)
        GlobalAggregation()
\end{lstlisting}
\end{algorithm}

\section{Stability of FedDecorr Regularization Loss}
In this section, we first split CIFAR10 into $10$ clients with $\alpha=0.5$.
Then, we plot how FedDecorr loss evolve within $10$ local epochs for all the $10$ clients in Fig.~\ref{fig: feddecorr_loss}. All training configurations are the same as in the main paper. From the results, one can observe that the optimization process of FedDecorr loss is stable.

\begin{figure}
    \centering
    \includegraphics[width=\textwidth]{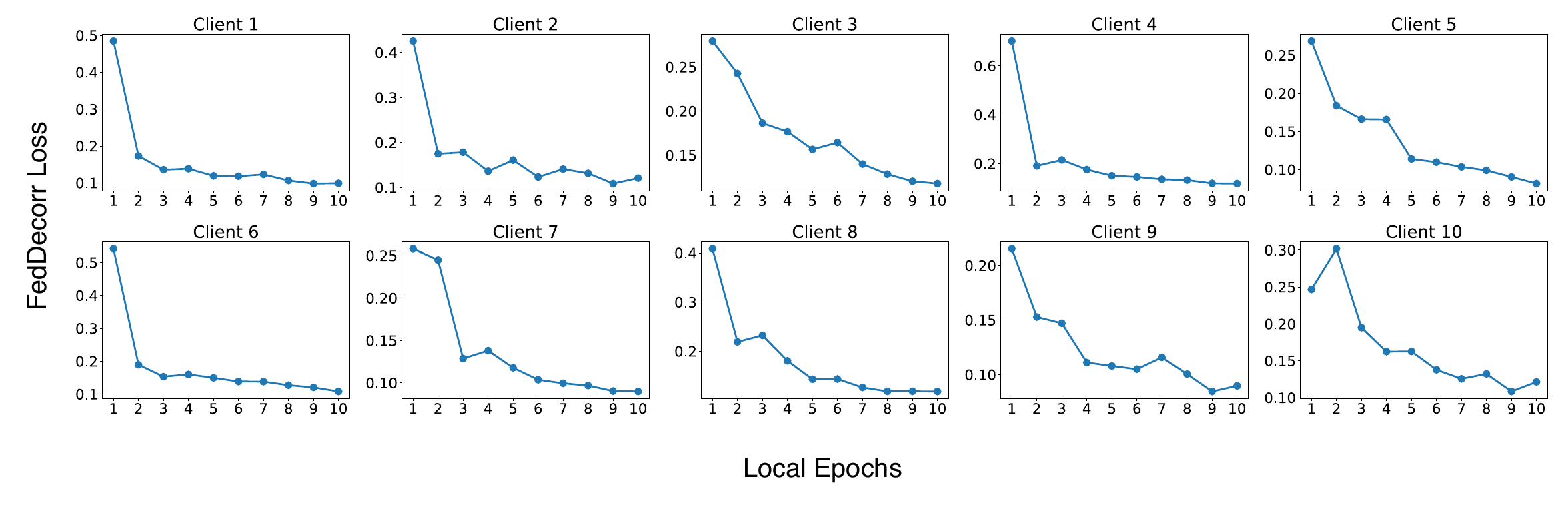}
    \vspace{-5mm}
    \caption{\textbf{How FedDecorr loss evolve within $10$ local epochs.}}
    \label{fig: feddecorr_loss}
\end{figure}

\end{document}